\newcommand{\cmark}{\textcolor{green!80!black}{\ding{51}}}
\newcommand{\xmark}{\textcolor{red!80!black}{\ding{55}}}
\newcommand{\defeq}{\vcentcolon=}
\newcommand{\el}{\mathcal{L}}
\newcommand{\softmax}{\operatorname{softmax}}
\newcommand{\relu}{\operatorname{relu}}
\newcommand{\bregman}{\operatorname{bregman}}
\newcommand{\kl}{D_\mathrm{KL}}
\renewcommand{\phi}{\varphi}
\newcommand{\set}[1]{\mathsf{#1}}
\newcommand{\half}{\tfrac{1}{2}}
\newcommand{\R}{\mathbb{R}}
\newcommand{\rank}{\operatorname{rank}}
\newcommand{\srank}{\operatorname{rank_{stable}}}
\newcommand{\trace}{\operatorname{tr}}
\newcommand{\diag}{\operatorname{diag}}
\newcommand{\abs}[1]{\vert {#1} \vert}
\newcommand{\norm}[1]{\Vert {#1} \Vert}
\newcommand{\uniform}{\textsc{uniform}}
\def\vf{{\bm{f}}}
\def\vg{{\bm{g}}}
\def\vh{{\bm{h}}}
\def\vu{{\bm{u}}}
\def\vv{{\bm{v}}}
\def\vw{{\bm{w}}}
\def\vx{{\bm{x}}}
\def\vy{{\bm{y}}}
\def\mD{{\bm{D}}}
\def\mM{{\bm{M}}}
\def\mO{{\bm{O}}}
\def\mP{{\bm{P}}}
\def\mW{{\bm{W}}}
\renewcommand*{\backrefalt}[4]{\ifcase #1 No citations. \or Cited on page #2. \else Cited on pages #2. \fi}
\newcommand{\captiontitle}[1]{\textsf{\textbf{#1}}}
\renewcommand\paragraph{\@startsection{paragraph}{4}{\z@}{0ex \@plus1ex \@minus.2ex}{-1em}{\bfseries\sffamily}}
\xpatchcmd \thmt@restatable{\thmt@toks{}}
{\def\thmt@tmp@restatename{#3}\thmt@toks{}}{}{\fail}
\labelsep\mbox{\hyperref[proof:\thmt@tmp@restatename]{\bf\sffamily ##1\ ##2}} \fi
\theoremstyle{plain}
\theoremstyle{plain}
\newtheorem{fact}{Fact}
\theoremstyle{plain}
\newtheorem{prescription}{Prescription}
\crefname{prescription}{Prescription}{Prescriptions}
\theoremstyle{plain}
\theoremstyle{plain}
\theoremstyle{plain}
\theoremstyle{plain}
\theoremstyle{plain}
\theoremstyle{plain}
\newtheorem{assumption}{Assumption}
\crefname{assumption}{Assumption}{Assumptions}
\theoremstyle{plain}
\newtheorem{definition}{Definition}
\theoremstyle{plain}
\newtheorem{example}{Example}
\newcommand{\WCOMMENT}[1]{\hfill\begin{minipage}{20em}\COMMENT{#1}\end{minipage}}
\algrenewcommand\algorithmicindent{2.0em}
\algrenewcommand\alglinenumber[1]{\sffamily\footnotesize #1.}
\xpatchcmd{\algorithmic}{\labelsep 0.5em}{\labelsep 1.0em}{\typeout{Success!}}{\typeout{Oh dear!}}
\title{Automatic Gradient Descent:\\Deep Learning without Hyperparameters}
\newcommand{\authspace}{\hspace{3.19em}}
\newcommand{\auth}[2]{\begin{tabular}{@{}l@{}}{#1}\\\normalfont{#2}\end{tabular}}
\author{\sffamily\auth{Jeremy Bernstein$^\star$}{MIT}\authspace\auth{\hspace{-5pt}Chris Mingard$^\star$}{\hspace{-5pt}U.\ Oxford}\authspace\auth{Kevin Huang}{U.\ Washington}\authspace\auth{Navid Azizan}{MIT} \authspace\auth{Yisong Yue}{Caltech}}
\begin{document}

\maketitle
\thispagestyle{empty}

\vspace{-5ex}
\hfill$\star$ denotes equal contribution.\\

\begin{abstract}
    The architecture of a deep neural network is defined explicitly in terms of the number of layers, the width of each layer and the general network topology. Existing optimisation frameworks neglect this information in favour of implicit architectural information (e.g.~second-order methods) or architecture-agnostic distance functions (e.g.~mirror descent). Meanwhile, the most popular optimiser in practice---Adam---is based on heuristics. This paper builds a new framework for deriving optimisation algorithms that explicitly leverage neural architecture. The theory extends mirror descent to non-convex composite objective functions: the idea is to transform a Bregman divergence to account for the non-linear structure of neural architecture. Working through the details for deep fully-connected networks yields \textit{automatic gradient descent}: a first-order optimiser without any hyperparameters. Automatic gradient descent trains both fully-connected and convolutional networks out-of-the-box and at ImageNet scale. A PyTorch implementation is available at \url{https://github.com/jxbz/agd} and also in \cref{app:pytorch}. Overall, the paper supplies a rigorous theoretical foundation for a next-generation of architecture-dependent optimisers that work automatically and without hyperparameters.
\end{abstract}
{\sffamily\textbf{Keywords:}} majorise-minimise meta-algorithm, operator perturbation theory, architecture-aware optimisation

\sffamily
\setstretch{0}
\tableofcontents
\setstretch{1}
\normalfont

\begin{figure}
    \centering
    \includegraphics[viewport=0 0 16.32cm 6.69cm]{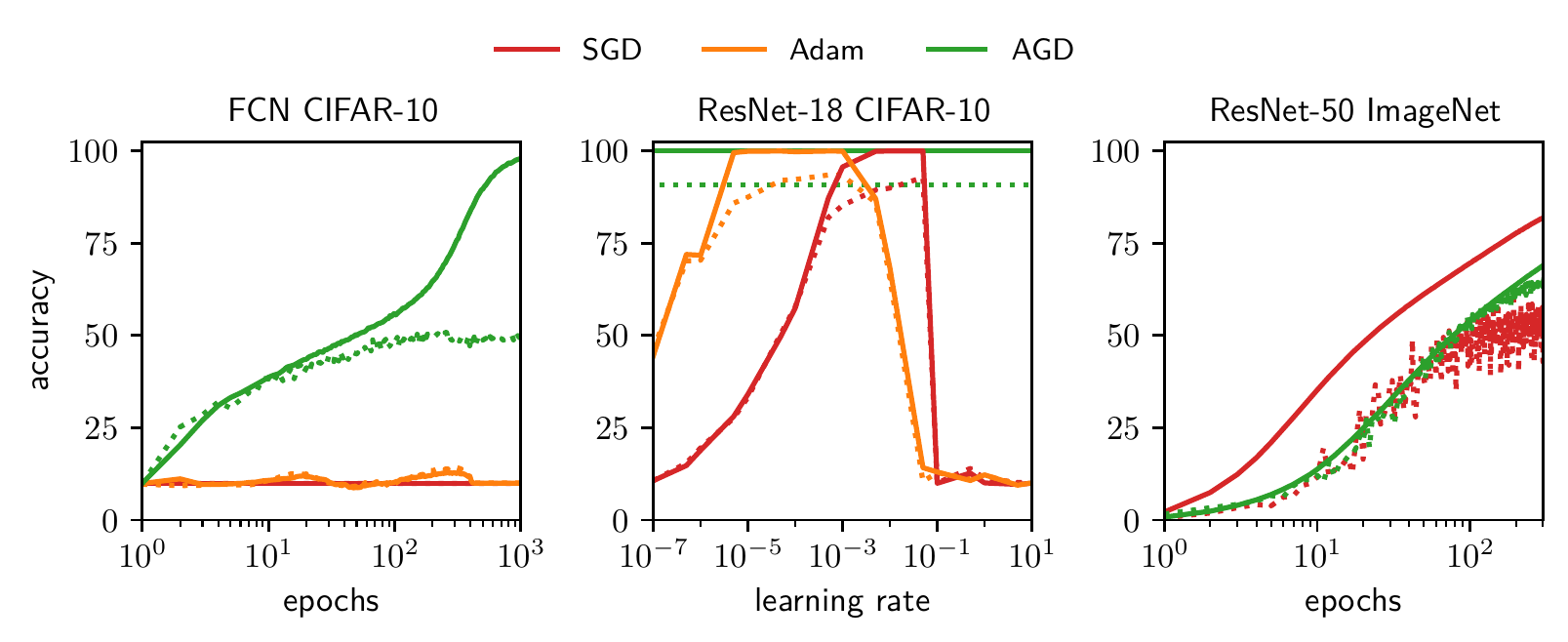}
    \caption{\captiontitle{Automatic gradient descent trains neural networks reliably without hyperparameters.} Solid lines show train accuracy and dotted lines show test accuracy. The networks are unregularised with biases and affine parameters disabled, as these features are not yet supported by AGD. In the \captiontitle{left panel}---unlike AGD---Adam and SGD failed to train a 32-layer fully-connected network on CIFAR-10 with their default learning rates of 0.001 for Adam and 0.1 for SGD. The \captiontitle{middle panel} displays a learning rate grid search for ResNet-18 trained on CIFAR-10. AGD attained performance comparable to the best tuned performance of Adam and SGD. In the \captiontitle{right panel}, AGD trained ResNet-50 on ImageNet to a top-1 test accuracy of 65.5\%. The ImageNet baseline is SGD with a learning rate of 0.1 and no learning rate decay schedule.}
    \label{fig:showcase}
\end{figure}
%natwidth=16.32cm, natheight=6.69cm

\section{Introduction}

Automatic differentiation has contributed to the rapid pace of innovation in the field of deep learning. Software packages such as PyTorch \citep{pytorch} and Theano \citep{theano} have advanced a programming paradigm where the user (1) defines a neural network architecture by composing differentiable operators and (2) supplies training data. The package then automatically computes the gradient of the error on the training data via recursive application of the chain rule. At this point, the user must become involved again by (3) selecting one of numerous optimisation algorithms and (4) manually tuning its hyperparameters: in particular, the initial learning rate and the learning rate decay schedule \citep{Goodfellow-et-al-2016}.

But manually tuning hyperparameters is irksome. An abundance of hyperparameters makes it difficult to rank the performance of different deep learning algorithms \citep{Lucic2017AreGC,crowded_valley} and difficult to reproduce results in the literature \citep{deeprlmatters}. Hyperparameters confound our efforts to build a scientific understanding of generalisation in deep learning \citep{jiang2019fantastic, my-margin}. And, when training neural networks at the largest scale, in pursuit of stronger forms of artificial intelligence, hyperparameter grid search can rack up millions of dollars in compute costs \citep{Sharir2020TheCO}. 

Are hyperparameters just a fact of life? The thesis of this paper is that \textit{no: they are not}. Deep learning involves fitting a known function to known data via minimising a known objective. If we could characterise these components both individually and in how they interact, then---in principle---there should be no leftover degrees of freedom to be tuned \citep{tutorial}. Taking this idea and running with it leads to \textit{automatic gradient descent} (AGD): a neural network optimiser without any hyperparameters. AGD is complementary to automatic differentiation and could help to automate general machine learning workflows.

Two existing tools are central to our derivation, and it is their novel combination that presents the main theoretical contribution of this paper. First, a classic tool from convex analysis known as the \textit{Bregman divergence} \citep{bregman1967relaxation,bregman} is used to characterise how the neural network interacts with the loss function. And second, a tool called \textit{deep relative trust} \citep{my-fromage} is used to characterise the highly non-linear interaction between the weights and the network output. With these tools in hand, we can apply the \textit{majorise-minimise meta-algorithm} \citep{mm} to derive an optimiser explicitly tailored to deep network objective functions. To summarise, the derivation of AGD follows three main steps:

\begin{enumerate}[label=Step \arabic*:, leftmargin=*, font=\sffamily]
    \item \textsf{Functional expansion}. We use a \textit{Bregman divergence} to express the linearisation error of the objective function $\el(\vw)$ in terms of the functional perturbation $\Delta \vf$ to the network $\vf$.
    \item \textsf{Architectural perturbation bounds.} We use \textit{deep relative trust} to relate the size and structure of the weight perturbation $\Delta \vw$ to the size of the induced functional perturbation $\Delta \vf$.
    \item \textsf{Majorise-minimise.} We substitute deep relative trust into the Bregman divergence to obtain an explicitly architecture-dependent majorisation. Minimising with respect to $\Delta \vw$ yields an optimiser.
\end{enumerate}

\paragraph{Summary of contributions} This paper derives automatic gradient descent (AGD) by applying the majorise-minimise meta-algorithm to deep network objective functions. AGD trains all tested network architectures without hyperparameters, and scales to deep networks such as ResNet-50 and large datasets such as ImageNet. AGD trains out-of-the-box even when Adam and SGD fail to train with their default hyperparameters.

\begin{table}
    \centering
    \begin{tabularx}{\textwidth}{lXccccc}
        \toprule
        \textbf{Optimiser} &
        \textbf{Reference} & 
        \makecell{\textbf{Hyperparameter}\\\textbf{Free}} &
        \makecell{\textbf{Width}\\\textbf{Scaling}} & \makecell{\textbf{Depth}\\\textbf{Scaling}} &  
        \makecell{\textbf{Automatic}\\\textbf{Schedule}} & 
        \makecell{\textbf{Memory}\\\textbf{Cost}} \\ 
        \midrule
        Adam    & $\mathrlap{\text{\citet{kingma_adam:_2015}}}$ & \xmark & \xmark & \xmark & \xmark & $3 \times \#$weights\\
        SGD + mom.     & $\mathrlap{\text{\citet{bottou}}}$ & \xmark & \xmark & \xmark & \xmark & $2\times\#$weights\\
        SGD + muP     & $\mathrlap{\text{\citet{Yang2021TensorPI}}}$ & \xmark & \cmark & \xmark & \xmark & $1\times\#$weights\\
        AGD     & this paper                & \cmark & \cmark & \cmark & \cmark & $1\times\#$weights\\ 
        \bottomrule
    \end{tabularx}
    \caption{\captiontitle{Comparing practical optimisers.} Adam and momentum-SGD employ running estimates of gradient statistics and thereby use more memory than AGD. In addition, Adam and SGD do not provide guidance on scaling hyperparameters with network architecture, although muP fixes this for the case of width scaling.}
    \label{tab:practice}
\end{table}

\subsection{Related work}

\paragraph{Optimisation theory} First-order optimisers leverage the first-order Taylor expansion of the objective function $\el(\vw)$---in particular, the gradient $\nabla_\vw\el(\vw)$. Theoretical treatments include mirror descent \citep{nemirovsky_yudin_1983}, 
natural gradient descent \citep{amari} and the Gauss-Newton method \citep{gauss-newton}. These methods have been explored in the context of deep learning \citep{revisiting-ngd,azizan2018stochastic,sun2022mirror}. First-order methods are amenable to deep learning since the gradient of the objective is available via recursive application of the chain rule---a.k.a.\ error back-propagation \citep{Rumelhart1986LearningRB}.

Second-order optimisers leverage the second-order Taylor expansion of the objective function $\el(\vw)$---in particular, the gradient $\nabla_\vw\el(\vw)$ and Hessian $\nabla^2_\vw\el(\vw)$. Examples include Newton's method \citep{Nocedal1999NumericalO} and cubic-regularised Newton's method \citep{Nesterov2006CubicRO}. Naïvely, second-order methods are less amenable to deep learning since the cost of the relevant Hessian computations is prohibitive at high dimension. That being said, efforts have been made to circumvent this issue \citep{hessian-linear}.

The majorise-minimise meta-algorithm \citep{mm} is an algorithmic pattern that can be used to derive optimisers. To apply the meta-algorithm, one must first derive an upper bound on the objective which matches the objective up to $k$th-order in its Taylor series for some integer $k$. This \textit{majorisation} can then be minimised as a proxy for reducing the original objective. \cref{fig:maj-min} illustrates the meta-algorithm for $k=1$.

\paragraph{Deep learning theory} The \textit{Lipschitz smoothness assumption}---a global constraint on the eigenvalues of the Hessian---is often used to derive and analyse neural network optimisers \citep{Agarwal2016FindingAL}. But this assumption has been questioned \citep{Zhang2020Why} and evidence has even been found for the reverse relationship, where the Hessian spectrum is highly sensitive to the choice of optimiser \citep{cohen2021gradient}.

These considerations motivate the development of theory that is more explicitly tailored to neural architecture. For instance, \citet{my-fromage} used an architectural perturbation bound termed \textit{deep relative trust} to characterise the neural network optimisation landscape as a function of network depth. Similarly, \citet{Yang2021TensorPI} sought to understand the role of width, leading to their \textit{maximal update parameterisation}. \cref{tab:practice,tab:theory} provide some points of comparison between automatic gradient descent and these and other frameworks.

\begin{figure}
\begin{minipage}{\textwidth}
\centering
\raisebox{-0.5\height}{\includegraphics{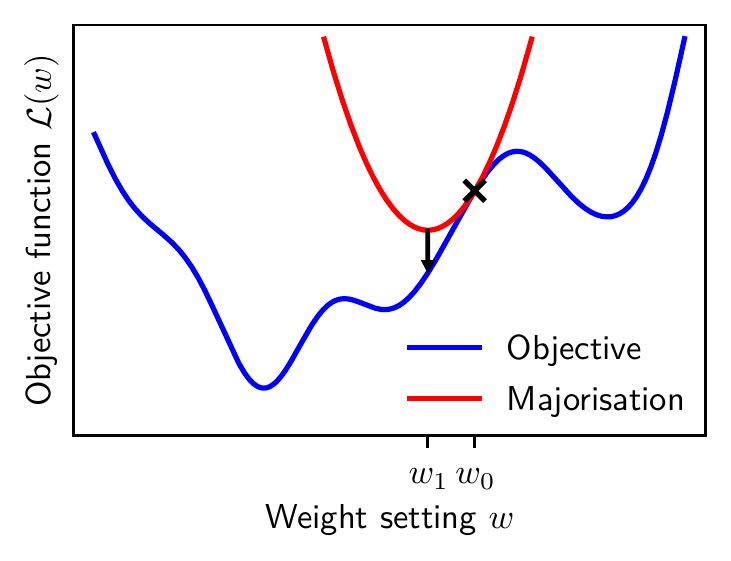}} \hfill     \raisebox{-0.5\height}{
        \begin{tikzpicture}
        [thick, block/.style={draw, minimum size=1.1cm}]
        
        \node [block,fill=red!30]    (a)               {$\el$};
        \node [block,fill=green!30]  (b) [right =of a] {$\ell$};
        \node [block,fill=orange!30] (c) [right =of b] {$\vf$};
        \node [block,fill=blue!30]   (d) [right =of c] {$\vw$};
        
        \node [block,fill=red!10]    (u) [below =of a] {$\Delta \el$};
        \node [block,fill=green!10]  (v) [below =of b] {$\Delta \ell$};
        \node [block,fill=orange!10] (w) [below =of c] {$\Delta \vf$};
        \node [block,fill=blue!10]   (x) [above =of d] {$\Delta \vw$};
        
        \node (p) [left  =of x, align=right, xshift=0.8cm, yshift=-0.04cm] {perturbation applied\\by optimiser};
        \node (q) [below =of v, yshift=0.9cm, align=center] {$\underbrace{\hspace{14.5em}}$\\[0.5ex]perturbations induced by optimiser};
        \node [below =of d, yshift=0.9cm, align=center] {weights};
        \node [above =of c, yshift=-0.825cm, align=center] {model};
        \node [above =of b, yshift=-0.825cm, align=center] {loss};
        \node [above =of a, yshift=-0.89cm, align=center] {objective};
        
        \draw[-latex] (b) edge (a);
        \draw[-latex] (c) edge (b);
        \draw[-latex] (d) edge (c);
        
        \draw[-latex] (a) edge (u);
        \draw[-latex] (b) edge (v);
        \draw[-latex] (c) edge (w);
        \draw[-latex] (x) edge (d);
        \end{tikzpicture}}
\end{minipage}
\caption{\captiontitle{Majorise-minimise and the perturbation hierarchy.} The \captiontitle{left panel} depicts the majorise-minimise meta-algorithm \citep{mm}, which is an algorithmic pattern for reducing an objective (blue) by minimising a sequence of upper bounds (one shown in red). The upper bounds, known as a \textit{majorisation}, must lie tangent to the objective to guarantee an improvement in one step of the meta-algorithm. The \captiontitle{right panel} depicts the perturbation hierarchy of a generic machine learning model: the optimiser perturbs the weights and this induces perturbations to the model output, the loss on individual training examples and ultimately the overall objective. Majorising machine learning objective functions requires addressing the full perturbation hierarchy.}
\label{fig:maj-min}
\end{figure}

\subsection{Preliminaries}

Given a vector $\vv$ in $\R^n$, we will need to measure its size in three different ways:

\begin{definition}[Manhattan norm] The \textit{Manhattan norm} $\norm{\,\cdot\,}_1$ of a vector $\vv$ is defined by $\norm{\vv}_1 \defeq \sum_{i} \abs{\vv_i}$.
\end{definition}

\begin{definition}[Euclidean norm] The \textit{Euclidean norm} $\norm{\,\cdot\,}_2$ of a vector $\vv$ is defined by $\smash{\norm{\vv}_2 \defeq \sqrt{\sum_{i} \vv_i^2}}$.
\end{definition}

\begin{definition}[Infinity norm] The \textit{infinity norm} $\norm{\,\cdot\,}_\infty$ of a vector $\vv$ is defined by $\norm{\vv}_\infty \defeq \max_{i} \abs{\vv_i}$.
\end{definition}

For a matrix $\mM$ in $\R^{m \times n}$, the reader should be aware that it has a singular value decomposition:
\begin{fact}[SVD] Every matrix $\mM$ in $\R^{m\times n}$ admits a \textit{singular value decomposition} (SVD) of the form $\mM = \sum_{i=1}^{\min(m,n)} \sigma_i(\mM) \cdot \vu_i \vv_i^\top$ where the \textit{left singular vectors} $\{\vu_i\}$ are orthonormal vectors in $\R^{m}$, the \textit{right singular vectors} $\{\vv_i\}$ are orthonormal vectors in $\R^{m}$ and the \textit{singular values} $\{\sigma_i(\mM)\}$ are non-negative scalars.
\end{fact}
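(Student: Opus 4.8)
The plan is to reduce the existence of the SVD to the spectral theorem for real symmetric matrices, which I will take as given. Without loss of generality assume $n \le m$ (otherwise replace $\mM$ by $\mM^\top$ and swap the roles of the left and right singular vectors), so that $\min(m,n) = n$. First I would form the Gram matrix $\mM^\top \mM \in \R^{n \times n}$. It is symmetric because $(\mM^\top \mM)^\top = \mM^\top \mM$, and positive semi-definite because $\vx^\top \mM^\top \mM \vx = \norm{\mM \vx}_2^2 \ge 0$ for every $\vx$. The spectral theorem then supplies an orthonormal basis $\vv_1, \dots, \vv_n$ of $\R^n$ consisting of eigenvectors of $\mM^\top \mM$, with eigenvalues $\lambda_1 \ge \dots \ge \lambda_n \ge 0$. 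I define the singular values by $\sigma_i(\mM) \defeq \sqrt{\lambda_i}$ and take the $\{\vv_i\}$ as the right singular vectors.

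Next I would construct the left singular vectors. For each index $i$ with $\sigma_i(\mM) > 0$, set $\vu_i \defeq \mM \vv_i / \sigma_i(\mM)$. These are orthonormal, since
\[
\ip{\vu_i}{\vu_j} = \frac{\vv_i^\top \mM^\top \mM \vv_j}{\sigma_i(\mM)\,\sigma_j(\mM)} = \frac{\lambda_j\, \ip{\vv_i}{\vv_j}}{\sigma_i(\mM)\,\sigma_j(\mM)},
\]
which equals $1$ when $i=j$ (using $\lambda_i = \sigma_i(\mM)^2$) and $0$ otherwise. For the remaining indices, where $\sigma_i(\mM) = 0$, I would pad the family $\{\vu_i\}$ out to an orthonormal collection in $\R^m$ by any completion; these choices are immaterial because such vectors enter the sum weighted by a zero singular value.

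Finally I would verify the reconstruction $\mM = \sum_{i=1}^{n} \sigma_i(\mM)\, \vu_i \vv_i^\top$. Since $\{\vv_j\}$ is an orthonormal basis of $\R^n$, it suffices to check that both sides agree on each $\vv_j$. Applying the right-hand side to $\vv_j$ and using $\vv_i^\top \vv_j = \delta_{ij}$ leaves the single term $\sigma_j(\mM)\, \vu_j$. When $\sigma_j(\mM) > 0$ this equals $\mM \vv_j$ by the definition of $\vu_j$; when $\sigma_j(\mM) = 0$ it is zero, and I would reconcile this with $\mM \vv_j$ by observing that $\norm{\mM \vv_j}_2^2 = \vv_j^\top \mM^\top \mM \vv_j = \lambda_j = 0$, so $\mM \vv_j = \vzero$ as well. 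Agreement on a basis gives equality of the two linear maps, completing the decomposition.

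The substantive content is inherited entirely from the spectral theorem: once an orthonormal eigenbasis of $\mM^\top \mM$ is in hand, the remainder is bookkeeping. The only mild obstacle is the treatment of the vanishing singular values—keeping the left singular vectors orthonormal and reconciling the rank-deficient directions on both sides—which the final verification step handles cleanly.
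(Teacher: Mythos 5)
Your proposal is correct. Note that the paper itself states this as a \textsf{Fact} in its preliminaries and supplies no proof of it anywhere (the appendix of proofs begins with the linearisation-error decomposition), so there is no in-paper argument to compare against; what you give is the canonical textbook construction via the spectral theorem applied to the Gram matrix $\mM^\top\mM$, and it is sound. The two points that usually cause trouble are both handled: the reduction to $n\le m$ guarantees that the $r$ constructed left singular vectors plus the $n-r$ padded ones fit orthonormally inside $\R^m$, and the verification that $\mM\vv_j=\vzero$ whenever $\lambda_j=0$ closes the rank-deficient case. Incidentally, your proof places the right singular vectors in $\R^n$, which is what the statement should say; the paper's assertion that the $\{\vv_i\}$ live in $\R^m$ is a typo.
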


The singular value decomposition allows us to measure the size of a matrix in two different ways:

\begin{definition}[Frobenius norm] The \textit{Frobenius norm} $\norm{\,\cdot\,}_F$ of a matrix $\mM$ is given by $\norm{\mM}_F \defeq \sqrt{\sum_{i} \sigma_i(\mM)^2}$.
\end{definition}
\begin{definition}[Operator norm] The \textit{operator norm} $\norm{\,\cdot\,}_*$ of a matrix $\mM$ is given by $\norm{\mM}_* \defeq \max_i \sigma_i(\mM)$.
\end{definition}
While the operator norm $\norm{\mM}_*$ reports the largest singular value, the quantity $\norm{\mM}_F / \sqrt{\min(m,n)}$ reports the root mean square singular value. Finally, we will need to understand two aspects of matrix conditioning:
\begin{definition}[Rank] The \textit{rank} of a matrix counts the number of non-zero singular values.
\end{definition}
\begin{definition}[Stable rank]
The \textit{stable rank} of a matrix $\mM$ is defined by $\srank \mM \defeq \norm{\mM}_F^2 / \norm{\mM}_*^2$.
\end{definition}
The stable rank provides an approximation to the rank that ignores the presence of very small singular values. Let us consider the extremes. An orthogonal matrix $\mO\in\R^{m\times n}$ has both full rank and full stable rank: $\rank \mO = \srank \mO = \min(m,n)$. A rank-one matrix $\mP$ has unit stable rank and satisfies $\norm{\mP}_* = \norm{\mP}_F$.
\section{Majorise-Minimise for Generic Learning Problems}
\label{sec:mm-ml}

\begin{table}
    \centering
    \begin{tabularx}{\textwidth}{Xlcc}
        \toprule
            \textbf{Theory} & \textbf{Reference} & \makecell{\textbf{Handles the Loss}\\\begin{tikzpicture}[thick, block/.style={draw, minimum size=0.6cm}]
    \node [block,fill=red!30]    (a)               {$\el$};
    \node [block,fill=green!30]  (b) [right=0.5cm of a] {$\ell$};
    \node [block,fill=orange!30] (c) [right=0.5cm of b] {$\vf$};
    \draw[-latex] (b) edge (a);
    \draw[-latex] (c) edge (b);
\end{tikzpicture}} & \makecell{\textbf{Non-Linear Network}\\ \begin{tikzpicture}
        [thick, block/.style={draw, minimum size=0.6cm}]
        \node [block,fill=orange!30] (c) {$\vf$};
        \node [block,fill=blue!30]   (d) [right= 0.5cm of c] {$\vw$};
        \draw[-latex] (d) edge (c);
        \end{tikzpicture}} \\
        \midrule
        mirror descent & $\mathrlap{\text{\citet{nemirovsky_yudin_1983}}}$\hspace{10em} & \cmark  & \xmark \\
        Gauss-Newton method &\citet{gauss-newton}& \cmark  & \xmark \\
        natural gradient descent &\citet{amari}& \cmark  & \xmark \\
        neural tangent kernel &\citet{NTKjacot}& \cmark  & \xmark \\
        deep relative trust &\citet{my-fromage}& \xmark  & \cmark \\
        tensor programs & \citet{Yang2021TensorPI}& \xmark  & \cmark \\
        automatic gradient descent & this paper &\cmark  & \cmark \\
        \bottomrule
\end{tabularx}
    \caption{\captiontitle{Comparing popular frameworks for first-order optimisation theory.} Frameworks differ in whether they can handle the interaction between the model output $\vf$ and the objective $\el$, and the complex non-linear interaction between the weights $\vw$ and the model output $\vf$. Our framework handles both aspects.}
    \label{tab:theory}
\end{table}

This section develops a framework for applying the majorise-minimise meta-algorithm to generic optimisation problems in machine learning. In particular, the novel technique of \textit{functional expansion} is introduced. \cref{sec:mm-dnn} will apply this technique to deep neural networks. All proofs are supplied in \cref{app:proofs}.

Given a machine learning model and a set of training data, our objective is to minimise the error of the model, averaged over the training data. Formally, we would like to minimise the following function:

\begin{definition}[Composite objective] Consider a machine learning model $\vf$ that maps an input $\vx$ and a weight vector $\vw$ to output $\vf(\vx;\vw)$. Given data $\set{S}$ and a convex loss function $\ell$, the \textit{objective} $\el(\vw)$ is defined by:
\begin{equation*}
    \el(\vw) \defeq \frac{1}{|\set{S}|}\sum_{(\vx,\vy) \in \set{S}} \ell(\vf(\vx;\vw), \vy).
\end{equation*}
\end{definition}
We refer to this objective as \textit{composite} since the loss function $\ell$ is \textit{composed} with a machine learning model $\vf$. While the loss function itself is convex, the overall composite is often non-convex due to the non-linear machine learning model. Common convex loss functions include the square loss and the cross-entropy loss:

\begin{example}[Square loss]\label{ex:sq-loss} The \textit{square loss} is defined by: $\ell(\vf(\vx; \vw), \vy) \defeq \frac{1}{2d_L} \norm{\vf(\vx; \vw) - \vy}_2^2$.
\end{example}
\begin{example}[Xent loss]\label{ex:xent-loss} The \textit{cross-entropy (xent) loss} is defined by: $\ell(\vf(\vx), \vy) \defeq - \log [\softmax(\vf(\vx))]^\top \vy$, where the softmax function is defined by $\softmax(\vf(\vx))\defeq \exp \vf(\vx) / \norm{\exp \vf(\vx)}_1$.
\end{example}

\subsection{Decomposition of linearisation error}

First-order optimisers leverage the linearisation of the objective at the current iterate. To design such methods, we must understand the realm of validity of this linearisation. To that end, we derive a very general decomposition of the linearisation error of a machine learning system. The result is stated in terms of a \textit{perturbation hierarchy}. In particular, perturbing the weight vector of a machine learning model $\vw \to \vw + \Delta \vw$ induces perturbations to the model output $\vf \to \vf + \Delta \vf$, to the loss on individual data samples $\ell \to \ell + \Delta \ell$ and, at last, to the overall objective function $\el \to \el + \Delta \el$. Formally, a weight perturbation $\Delta \vw$ induces:
\begin{flalign*}
    &\Delta \vf(\vx) &&\coloneqq \vf(\vx;\vw+\Delta \vw) - \vf(\vx; \vw); \hspace{16em} \tag{functional perturbation}\\
    &\Delta \ell(\vf(\vx), \vy) &&\coloneqq  \ell(\vf(\vx)+\Delta \vf(\vx),\vy) - \ell(\vf(\vx),\vy); \tag{loss perturbation}\\
    &\Delta \el(\vw) &&\coloneqq \textstyle\frac{1}{|\set{S}|}\sum_{(\vx,\vy) \in \set{S}} \Delta \ell(\vf(\vx), \vy) \tag{objective perturbation}.
\end{flalign*}
We have adopted a compact notation where the dependence of $\vf(\vx;\vw)$ on $\vw$ is at times suppressed. The perturbation hierarchies of a generic machine learning model and a deep neural network are visualised in \cref{fig:maj-min,fig:apbs}, respectively. The linearisation error of the objective perturbation $\Delta \el$ decomposes as:

\begin{restatable}[Decomposition of linearisation error]{proposition}{decomposition}\label{thm:decomposition}For any differentiable loss $\ell$ and any differentiable machine learning model $\vf$ the linearisation error of the objective function $\el$ admits the following decomposition:
    \begin{align*}
    \quad\quad\quad\underbrace{\Delta \el(\vw) - \nabla_\vw\el(\vw)^\top \Delta \vw}_{\mathclap{\text{linearisation error of objective}}} \quad\quad&= &&\frac{1}{|\set{S}|}\sum_{(\vx,\vy)\in \set{S}} \nabla_{\vf(\vx)} \ell(\vf(\vx),\vy)^\top \underbrace{\left[\Delta \vf(\vx) - \nabla_\vw \vf(\vx) \Delta \vw \right]}_{\mathclap{\text{linearisation error of model}}} \\ &&+\,&\frac{1}{|\set{S}|}\sum_{(\vx,\vy)\in \set{S}}\underbrace{\Delta \ell(\vf(\vx), \vy) -\nabla_{\vf(\vx)}\ell(\vf(\vx),\vy)^\top \Delta \vf(\vx)}_{\text{linearisation error of loss}}.\quad\quad\quad \nonumber
    \end{align*}
\end{restatable}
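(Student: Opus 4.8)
The plan is to recognise that this is an exact algebraic identity rather than an approximation, so no Taylor expansion or analytic estimate is needed: everything follows from the chain rule together with the linearity of averaging over $\set{S}$. The conceptual engine is that a single shared first-order surrogate, $\nabla_{\vf(\vx)}\ell(\vf(\vx),\vy)^\top \Delta \vf(\vx)$, can be inserted and subtracted to split the total linearisation error into a model-attributable piece and a loss-attributable piece.

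First I would compute the gradient of the objective at the current iterate. Since $\el(\vw) = \frac{1}{|\set{S}|}\sum_{(\vx,\vy)\in\set{S}} \ell(\vf(\vx;\vw),\vy)$ is a composition of the loss $\ell$ with the model $\vf$, the chain rule gives $\nabla_\vw \el(\vw) = \frac{1}{|\set{S}|}\sum_{(\vx,\vy)\in\set{S}} [\nabla_\vw \vf(\vx)]^\top \nabla_{\vf(\vx)}\ell(\vf(\vx),\vy)$, where $\nabla_\vw\vf(\vx)$ denotes the Jacobian of the model output with respect to the weights. Taking the inner product with the perturbation $\Delta\vw$ and using the definition $\Delta\el(\vw) = \frac{1}{|\set{S}|}\sum_{(\vx,\vy)\in\set{S}} \Delta\ell(\vf(\vx),\vy)$, I can rewrite the left-hand side as $\frac{1}{|\set{S}|}\sum_{(\vx,\vy)\in\set{S}}\bigl[\Delta\ell(\vf(\vx),\vy) - \nabla_{\vf(\vx)}\ell(\vf(\vx),\vy)^\top \nabla_\vw\vf(\vx)\,\Delta\vw\bigr]$.

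Next I would expand the right-hand side. Distributing the inner product in the first sum produces a term $+\frac{1}{|\set{S}|}\sum \nabla_{\vf(\vx)}\ell^\top \Delta\vf(\vx)$ and a term $-\frac{1}{|\set{S}|}\sum \nabla_{\vf(\vx)}\ell^\top \nabla_\vw\vf(\vx)\,\Delta\vw$, while the second sum contributes $+\frac{1}{|\set{S}|}\sum \Delta\ell$ and $-\frac{1}{|\set{S}|}\sum \nabla_{\vf(\vx)}\ell^\top \Delta\vf(\vx)$. The two copies of $\nabla_{\vf(\vx)}\ell^\top\Delta\vf(\vx)$ cancel exactly, leaving precisely the expression for the left-hand side obtained in the previous step, which closes the argument.

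There is essentially no analytic obstacle: the identity holds termwise for each $(\vx,\vy)$, and the averaging passes through by linearity. The only points requiring care are bookkeeping ones—keeping the Jacobian $\nabla_\vw\vf$ and the loss gradient $\nabla_{\vf}\ell$ in the correct order so that $\nabla_\vw\el(\vw)^\top\Delta\vw$ matches the cross term appearing in the ``linearisation error of model'' bracket, and confirming that the shared surrogate term cancels cleanly. I expect this last cancellation to be the step most worth stating explicitly, since it is what makes the decomposition additive and interpretable rather than merely a restatement.
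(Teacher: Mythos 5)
Your proposal is correct and follows essentially the same route as the paper's proof: apply the chain rule to identify $\nabla_\vw\el(\vw)^\top\Delta\vw$ with the averaged cross term $\nabla_{\vf(\vx)}\ell(\vf(\vx),\vy)^\top\nabla_\vw\vf(\vx)\,\Delta\vw$, then add and subtract the surrogate $\nabla_{\vf(\vx)}\ell(\vf(\vx),\vy)^\top\Delta\vf(\vx)$ so the two pieces of linearisation error separate. The paper states this add-and-subtract step in one line; your version simply verifies the same cancellation by expanding the right-hand side instead, which is the identical argument read in the opposite direction.
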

In words: the linearisation error of the objective decomposes into two terms. The first depends on the linearisation error of the machine learning model and the second the loss. This decomposition relies on nothing but differentiability. For a convex loss, the second term may be interpreted as a Bregman divergence:

\begin{definition}[Bregman divergence of loss]\label{def:bregman} For any convex loss $\ell$:
\begin{flalign*}
    \qquad\qquad\qquad\qquad\bregman_{\ell(\cdot,\vy)}(\vf(\vx), \Delta \vf(\vx)) \defeq \Delta \ell(\vf(\vx), \vy) -\nabla_{\vf(\vx)}\ell(\vf(\vx),\vy)^\top \Delta \vf(\vx). &&
\end{flalign*}
\end{definition}

 A Bregman divergence is just the linearisation error of a convex function. Two important examples are:

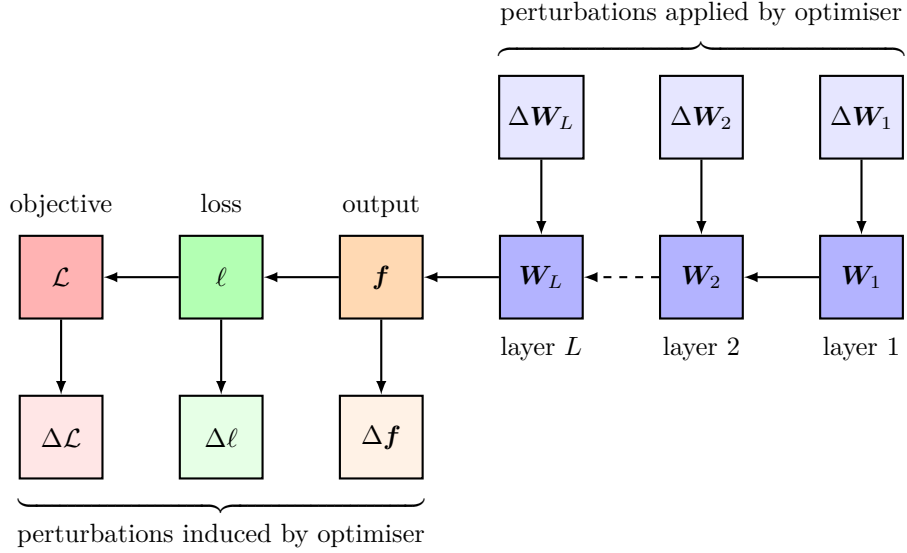
\begin{figure}
    \centering
    \begin{tikzpicture}
        [thick, block/.style={draw, minimum size=1.1cm}]
        
        \node [block,fill=red!30]    (aa)               {$\el$};
        \node [block,fill=green!30]  (bb) [right =of aa] {$\ell$};
        
        \node [block,fill=orange!30]    (a) [right =of bb]               {$\vf$};
        \node [block,fill=blue!30]      (b) [right =of a] {$\mW_L$};
        \node [block,fill=blue!30]      (c) [right =of b] {$\mW_2$};
        \node [block,fill=blue!30]      (d) [right =of c] {$\mW_1$};
        
        \node [block,fill=orange!10] (v) [below =of a] {$\Delta \vf$};
        \node [block,fill=blue!10]   (x) [above =of b] {$\Delta \mW_L$};
        \node [block,fill=blue!10]   (y) [above =of c] {$\Delta \mW_2$};
        \node [block,fill=blue!10]   (z) [above =of d] {$\Delta \mW_1$};
        
        \node [block,fill=red!10]    (uu) [below =of aa] {$\Delta \el$};
        \node [block,fill=green!10]  (vv) [below =of bb] {$\Delta \ell$};
        
        \node (q) [above =of y, yshift=-0.9cm, align=center] {perturbations applied by optimiser\\[0.5ex]$\overbrace{\hspace{15.5em}}$};
        \node (p) [below =of vv, yshift=0.9cm, align=center] {$\underbrace{\hspace{15.5em}}$\\[0.5ex]perturbations induced by optimiser};
        
        \node [below =of d, yshift=0.9cm, align=center] {layer $1$};
        \node [below =of c, yshift=0.9cm, align=center] {layer $2$};
        \node [below =of b, yshift=0.9cm, align=center] {layer $L$};
        \node [above =of a, yshift=-0.89cm, align=center] {output};
        \node [above =of bb, yshift=-0.825cm, align=center] {loss};
        \node [above =of aa, yshift=-0.89cm, align=center] {objective};
        
        \draw[-latex]           (bb) edge (aa);
        \draw[-latex]           (a) edge (bb);
        \draw[-latex]           (b) edge (a);
        \draw[-latex, dashed]   (c) edge (b);
        \draw[-latex]           (d) edge (c);
        
        \draw[-latex] (a) edge (v);
        \draw[-latex] (x) edge (b);
        \draw[-latex] (y) edge (c);
        \draw[-latex] (z) edge (d);
        
        \draw[-latex] (aa) edge (uu);
        \draw[-latex] (bb) edge (vv);
        
        \end{tikzpicture}
    \caption{\captiontitle{Perturbation hierarchy of a deep neural network.} When training a neural network, the optimiser applies structured perturbations to the weights, in the form of one perturbation matrix $\Delta \mW_k$ per weight matrix $\mW_k$. Deep relative trust \citep{my-fromage} provides a tool to understand how structured weight perturbations of this form affect the network output $\vf$. Combining deep relative trust with a Bregman divergence \citep{bregman1967relaxation} allows us to analyse the full perturbation hierarchy.}
    \label{fig:apbs}
\end{figure}

\begin{restatable}[Bregman divergence of square loss]{lemma}{squarebreg}\label{lem:sq-bregman}
When $\ell$ is set to square loss, then:
\begin{flalign*}
    \qquad\qquad\qquad\qquad\bregman_{\ell(\cdot,\vy)}(\vf(\vx), \Delta \vf(\vx)) = \tfrac{1}{2d_L} \norm{\Delta \vf(\vx)}_2^2.&&
\end{flalign*}
\end{restatable}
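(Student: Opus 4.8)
The plan is to substitute the square loss directly into the definition of the Bregman divergence (\cref{def:bregman}) and to verify that the first-order terms cancel, leaving only the quadratic remainder. Since the square loss is a quadratic function of its first argument $\vf(\vx)$, its second-order Taylor expansion is exact; one should therefore expect the Bregman divergence---which is by definition exactly the linearisation error of the loss---to collapse to a clean quadratic form with no higher-order corrections.

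First I would record the gradient of the square loss with respect to its first argument. Writing $\ell(\vf(\vx),\vy) = \tfrac{1}{2d_L}\norm{\vf(\vx)-\vy}_2^2$, differentiation gives $\nabla_{\vf(\vx)}\ell(\vf(\vx),\vy) = \tfrac{1}{d_L}(\vf(\vx)-\vy)$. Next I would compute the loss perturbation $\Delta\ell$ by expanding the squared Euclidean norm at the perturbed point $\vf(\vx)+\Delta\vf(\vx)$. The expansion $\norm{\vf(\vx)+\Delta\vf(\vx)-\vy}_2^2 = \norm{\vf(\vx)-\vy}_2^2 + 2(\vf(\vx)-\vy)^\top\Delta\vf(\vx) + \norm{\Delta\vf(\vx)}_2^2$ yields $\Delta\ell(\vf(\vx),\vy) = \tfrac{1}{d_L}(\vf(\vx)-\vy)^\top\Delta\vf(\vx) + \tfrac{1}{2d_L}\norm{\Delta\vf(\vx)}_2^2$.

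Finally I would subtract the first-order term $\nabla_{\vf(\vx)}\ell(\vf(\vx),\vy)^\top\Delta\vf(\vx)$ from $\Delta\ell$. By the gradient computed above, this linear term equals $\tfrac{1}{d_L}(\vf(\vx)-\vy)^\top\Delta\vf(\vx)$, which exactly cancels the linear part of $\Delta\ell$ and leaves $\tfrac{1}{2d_L}\norm{\Delta\vf(\vx)}_2^2$, as claimed.

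There is no genuine obstacle here: the result is an exact identity rather than an estimate, and the only point requiring care is the bookkeeping of the $\tfrac{1}{2d_L}$ normalisation together with confirming the cancellation of the cross term $(\vf(\vx)-\vy)^\top\Delta\vf(\vx)$. The conceptual content worth emphasising is that the square loss is exactly quadratic, so its Bregman divergence is independent both of the base point $\vf(\vx)$ and of the target $\vy$, depending only on the magnitude of the functional perturbation $\Delta\vf(\vx)$---a property that will streamline the majorisation step in \cref{sec:mm-dnn}.
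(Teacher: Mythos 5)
Your proof is correct and follows essentially the same route as the paper's: expand the squared Euclidean norm at the perturbed point and observe that the cross term is exactly the gradient inner product $\nabla_{\vf(\vx)}\ell(\vf(\vx),\vy)^\top\Delta\vf(\vx)$, leaving only $\tfrac{1}{2d_L}\norm{\Delta\vf(\vx)}_2^2$. If anything you are slightly more careful than the printed proof, which omits the $\tfrac{1}{d_L}$ factor from both the cross term and the gradient (two compensating slips), whereas you track it consistently.
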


\begin{restatable}[Bregman divergence of xent loss]{lemma}{xentbreg} \label{lem:xent-bregman}
When $\ell$ is set to cross-entropy loss, and if $\vy^\top \bm{1} =1$, then:
    \begin{flalign*}
        \qquad\qquad\qquad\qquad\bregman_{\ell(\cdot,\vy)}(\vf(\vx), \Delta \vf(\vx)) &= \kl \Big(\softmax(\vf(\vx))\,\Big|\Big|\, \softmax(\vf(\vx)+\Delta \vf(\vx))\Big)&& \\
        &\leq \half\norm{\Delta \vf(\vx)}_\infty^2 + \mathcal{O}(\Delta \vf^3).&&
    \end{flalign*}
\end{restatable}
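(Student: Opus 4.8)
The plan is to reduce the Bregman divergence of the cross-entropy loss to the Bregman divergence of the log-sum-exp function, identify that quantity with the stated KL divergence, and then obtain the quadratic bound by a second-order Taylor expansion. Throughout I suppress the dependence on $\vx$, write $p \defeq \softmax(\vf)$ and $q \defeq \softmax(\vf + \Delta\vf)$, and abbreviate the log-partition (log-sum-exp) function as $A(\vf) \defeq \log\norm{\exp\vf}_1$, so that $\log[\softmax(\vf)]_i = \vf_i - A(\vf)$, i.e.\ $\log\softmax(\vf) = \vf - A(\vf)\vone$ as a vector.

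First I would rewrite the loss. Expanding the definition gives $\ell(\vf,\vy) = -[\vf - A(\vf)\vone]^\top\vy = -\vy^\top\vf + (\vy^\top\vone)\,A(\vf)$, which under the hypothesis $\vy^\top\vone = 1$ simplifies to $\ell(\vf,\vy) = -\vy^\top\vf + A(\vf)$. The $\vy$-dependent term is affine in $\vf$ and hence contributes nothing to any linearisation error; this is exactly where the normalisation $\vy^\top\vone = 1$ is needed, since otherwise the divergence would pick up a spurious factor of $\vy^\top\vone$. Using $\nabla_\vf A(\vf) = p$, the loss gradient is $\nabla_\vf\ell(\vf,\vy) = p - \vy$, and substituting into \cref{def:bregman} the affine contributions cancel to leave $\bregman_{\ell(\cdot,\vy)}(\vf,\Delta\vf) = A(\vf+\Delta\vf) - A(\vf) - p^\top\Delta\vf$.

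Next I would verify that this equals $\kl(p \,\|\, q)$. Writing $\kl(p \,\|\, q) = \sum_i p_i(\log p_i - \log q_i)$ and substituting $\log p_i = \vf_i - A(\vf)$ together with $\log q_i = (\vf_i + \Delta\vf_i) - A(\vf+\Delta\vf)$, the $\vf_i$ terms cancel, and since $\sum_i p_i = 1$ the remaining expression collapses to $A(\vf+\Delta\vf) - A(\vf) - p^\top\Delta\vf$, matching the Bregman divergence term for term. This settles the equality asserted in the lemma.

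Finally, for the bound I would Taylor-expand $A$ about $\vf$. Since $\nabla_\vf A = p$ and a short computation gives the Hessian $\nabla^2_\vf A(\vf) = \diag(p) - pp^\top$, Taylor's theorem yields $\bregman_{\ell(\cdot,\vy)}(\vf,\Delta\vf) = \half\,\Delta\vf^\top(\diag(p) - pp^\top)\Delta\vf + \mathcal{O}(\Delta\vf^3)$. The quadratic form is precisely the variance $\Var_{i\sim p}[\Delta\vf_i] = \sum_i p_i(\Delta\vf_i)^2 - (\sum_i p_i\Delta\vf_i)^2$ of the perturbation coordinates under $p$. Dropping the non-negative subtracted term and bounding $\sum_i p_i(\Delta\vf_i)^2 \leq \norm{\Delta\vf}_\infty^2\sum_i p_i = \norm{\Delta\vf}_\infty^2$ delivers the claimed $\half\norm{\Delta\vf}_\infty^2 + \mathcal{O}(\Delta\vf^3)$. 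The one genuinely illuminating step is recognising the log-sum-exp Hessian as the categorical covariance $\diag(p) - pp^\top$, so that the second-order term is a variance; once this is in hand, the infinity-norm bound is immediate, and everything else is bookkeeping resting on the normalisation hypothesis and the cancellation of the affine terms.
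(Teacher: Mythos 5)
Your proposal is correct and follows essentially the same route as the paper's proof: reduce to the Bregman divergence of the log-sum-exp function by discarding the affine term (using $\vy^\top\vone=1$), identify it with the KL divergence by direct substitution, and obtain the bound from the second-order Taylor expansion with Hessian $\diag(p)-pp^\top$, dropping the subtracted rank-one term and bounding the remainder via $\norm{p}_1=1$. The variance interpretation of the quadratic form is a nice gloss but does not change the argument.
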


Our methods may be applied to other convex losses by calculating or bounding their Bregman divergence.

\subsection{Functional expansion and functional majorisation}

Before continuing, we make one simplifying assumption. Observe that the first term on the right-hand side of \cref{thm:decomposition} is a high-dimensional inner product between two vectors. Since there is no clear reason why these two vectors should be aligned, let us assume that their inner product is zero:
\begin{assumption}[Orthogonality of model linearisation error]\label{ass:orthog}
In the same setting as \cref{thm:decomposition}:
\begin{equation*}
    \frac{1}{|\set{S}|}\sum_{(\vx,\vy)\in \set{S}} \nabla_{\vf(\vx)} \ell(\vf(\vx),\vy)^\top \underbrace{\left[\Delta \vf(\vx) - \nabla_\vw \vf(\vx) \Delta \vw \right]}_{\mathclap{\text{linearisation error of model}}} = 0.
\end{equation*}
\end{assumption}

While it is possible to work without this assumption \citep{bernstein-thesis}, we found that its inclusion simplifies the analysis and in practice did not lead to a discernible weakening of the resulting algorithm. In any case, this assumption is considerably milder than the common assumption in the literature \citep{revisiting-ngd,NEURIPS2019_0d1a9651} that the model linearisation error is itself zero: $\left[\Delta \vf(\vx) - \nabla_\vw \vf(\vx) \Delta \vw \right] = 0$.

Armed with \cref{thm:decomposition} and \cref{ass:orthog}, we are ready to introduce functional expansion and majorisation:

\begin{restatable}[Functional expansion]{theorem}{functmajor}\label{thm:functmajor}Consider a convex differentiable loss $\ell$ and a differentiable machine learning model $\vf$. Under \cref{ass:orthog}, the corresponding composite objective $\el$ admits the expansion:
    \begin{align*}
    \el(\vw + \Delta \vw) = \underbrace{\el(\vw) + \nabla_\vw\el(\vw)^\top \Delta \vw}_{\text{first-order Taylor series}} +\frac{1}{|\set{S}|}\sum_{(\vx,\vy)\in \set{S}}\bregman_{\ell(\cdot,\vy)}(\vf(\vx), \Delta \vf(\vx)).
    \end{align*}
\end{restatable}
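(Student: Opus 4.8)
The plan is to assemble the statement directly, since once \cref{thm:decomposition} is in hand the theorem is essentially a bookkeeping consequence: all the analytic work (differentiating the composition and grouping the residual terms) has already been carried out there. First I would start from the definition of the objective perturbation, which gives $\el(\vw+\Delta\vw) = \el(\vw) + \Delta\el(\vw)$, so that proving the expansion reduces to computing $\Delta\el(\vw)$ and showing it equals $\nabla_\vw\el(\vw)^\top \Delta\vw$ plus the averaged Bregman divergence.

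Next I would invoke \cref{thm:decomposition}, which writes the linearisation error $\Delta\el(\vw) - \nabla_\vw\el(\vw)^\top\Delta\vw$ as a sum of two averaged terms: one built from the linearisation error of the model, the other from the linearisation error of the loss. The orthogonality hypothesis of \cref{ass:orthog} is precisely the statement that the first (model) term vanishes, so it drops out. What remains is the averaged loss linearisation error $\frac{1}{|\set{S}|}\sum_{(\vx,\vy)\in\set{S}}\left[\Delta\ell(\vf(\vx),\vy) - \nabla_{\vf(\vx)}\ell(\vf(\vx),\vy)^\top\Delta\vf(\vx)\right]$, which by \cref{def:bregman} is exactly the per-sample Bregman divergence $\bregman_{\ell(\cdot,\vy)}(\vf(\vx),\Delta\vf(\vx))$. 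Substituting, rearranging to solve for $\Delta\el(\vw)$, and adding back $\el(\vw)$ then yields the claimed identity.

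The main ``obstacle'' is really recognising that there is no obstacle: the only thing to verify is that the hypotheses line up. Specifically, \cref{thm:decomposition} needs only differentiability of $\ell$ and $\vf$, which the theorem supplies; \cref{ass:orthog} is exactly what is required to delete the model term; and convexity of $\ell$ is what licenses naming the residual loss term a Bregman divergence through \cref{def:bregman}. I would confirm that no regularity beyond these is smuggled in---in particular that $\Delta\vf(\vx)$ is well-defined from the functional perturbation and that the sum over $\set{S}$ may be handled termwise---after which the result follows by direct substitution.
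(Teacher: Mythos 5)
Your proposal is correct and follows exactly the paper's own argument: substitute \cref{ass:orthog} into \cref{thm:decomposition} to eliminate the model linearisation term, identify the remaining loss linearisation error as the Bregman divergence via \cref{def:bregman}, and rearrange. No gaps; the paper's proof is a one-line version of the same reasoning.
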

So the perturbed objective $\el(\vw+\Delta \vw)$ may be written as the sum of its first-order Taylor expansion with a Bregman divergence in the model outputs averaged over the training set.
It is straightforward to specialise this result to different losses by substituting in their Bregman divergence:

\begin{restatable}[Functional expansion of mean squared error]{corollary}{sqmajor}\label{lem:sq-major} Under \cref{ass:orthog}, for square loss:
    \begin{flalign*}
    \qquad\qquad\qquad\qquad\el(\vw + \Delta \vw) = \el(\vw) + \nabla_\vw\el(\vw)^\top \Delta \vw +\frac{1}{|\set{S}|}\sum_{(\vx,\vy)\in \set{S}}\tfrac{1}{2d_L} \norm{\Delta \vf(\vx)}_2^2.&&
    \end{flalign*}
\end{restatable}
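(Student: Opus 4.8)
The plan is to derive this corollary as an immediate specialisation of the functional expansion theorem (\cref{thm:functmajor}). That theorem already supplies, under \cref{ass:orthog}, the exact identity
\[
\el(\vw + \Delta \vw) = \el(\vw) + \nabla_\vw\el(\vw)^\top \Delta \vw +\frac{1}{|\set{S}|}\sum_{(\vx,\vy)\in \set{S}}\bregman_{\ell(\cdot,\vy)}(\vf(\vx), \Delta \vf(\vx)),
\]
valid for any convex differentiable loss. The only remaining task is to evaluate the Bregman divergence for the specific choice of square loss and substitute it term-by-term into the sum.

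For that substitution I would invoke \cref{lem:sq-bregman}, which states that for square loss $\bregman_{\ell(\cdot,\vy)}(\vf(\vx), \Delta \vf(\vx)) = \tfrac{1}{2d_L}\norm{\Delta \vf(\vx)}_2^2$. Replacing each summand of the Bregman term with this closed form yields exactly the claimed expansion. If one preferred a self-contained argument rather than citing \cref{lem:sq-bregman}, the divergence could be recomputed in place: writing $\ell(\vf,\vy) = \tfrac{1}{2d_L}\norm{\vf - \vy}_2^2$ with gradient $\nabla_{\vf}\ell = \tfrac{1}{d_L}(\vf - \vy)$, one expands $\norm{\vf + \Delta\vf - \vy}_2^2$ and observes that the terms linear in $\Delta\vf$ cancel against $\nabla_{\vf}\ell^\top\Delta\vf$, leaving the pure quadratic $\tfrac{1}{2d_L}\norm{\Delta\vf}_2^2$.

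There is no real obstacle here: the substantive content lives entirely in \cref{thm:functmajor} (which in turn rests on the linearisation-error decomposition of \cref{thm:decomposition} together with \cref{ass:orthog}) and in the exactness of the square-loss Bregman computation. Given those ingredients the corollary is a one-line substitution. The only point worth flagging is that the square-loss Bregman divergence is \emph{exact}, with no higher-order remainder of the kind appearing in the cross-entropy case of \cref{lem:xent-bregman}, so the result holds as a genuine equality rather than as an inequality or asymptotic approximation.
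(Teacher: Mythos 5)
Your proposal matches the paper's proof exactly: the paper also obtains this corollary by combining \cref{thm:functmajor} with \cref{lem:sq-bregman}, and your optional in-place recomputation of the square-loss Bregman divergence is precisely the paper's proof of \cref{lem:sq-bregman}. The argument is correct and complete.
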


\begin{restatable}[Functional majorisation for xent loss]{corollary}{xentmajor}\label{lem:xent-major}
Under \cref{ass:orthog}, for cross-entropy loss, if $\vy^\top \bm{1} =1$:
    \begin{flalign*}
    \qquad\qquad\qquad\qquad\el(\vw + \Delta \vw) \leq \el(\vw) + \nabla_\vw\el(\vw)^\top \Delta \vw +\frac{1}{|\set{S}|}\sum_{(\vx,\vy)\in \set{S}}\half\norm{\Delta \vf(\vx)}_\infty^2 + \mathcal{O}(\Delta \vf^3).&&
    \end{flalign*}
\end{restatable}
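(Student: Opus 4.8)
The plan is to obtain this corollary as an immediate consequence of the functional expansion, \cref{thm:functmajor}, combined with the cross-entropy Bregman bound, \cref{lem:xent-bregman}. The structure is pure substitution: \cref{thm:functmajor} gives an \emph{exact} expansion valid for any convex differentiable loss, and replacing each per-example Bregman term by its cross-entropy upper bound turns that equality into the desired inequality (i.e.\ a genuine majorisation).

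First I would invoke \cref{thm:functmajor}. Under \cref{ass:orthog}, it expresses $\el(\vw+\Delta\vw)$ exactly as the first-order Taylor series $\el(\vw) + \nabla_\vw\el(\vw)^\top\Delta\vw$ plus the dataset-average $\frac{1}{|\set{S}|}\sum_{(\vx,\vy)\in\set{S}}\bregman_{\ell(\cdot,\vy)}(\vf(\vx),\Delta\vf(\vx))$. Since this holds for every convex differentiable loss, it applies verbatim to cross-entropy. Second, I would specialise to the cross-entropy loss and apply \cref{lem:xent-bregman}. Carrying over the hypothesis $\vy^\top\vone=1$ into the corollary is exactly what licenses that lemma, which both identifies each Bregman term with a KL divergence and supplies the pointwise bound $\bregman_{\ell(\cdot,\vy)}(\vf(\vx),\Delta\vf(\vx)) \leq \half\norm{\Delta\vf(\vx)}_\infty^2 + \mathcal{O}(\Delta\vf^3)$. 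Third, because the average over $\set{S}$ is a convex combination with nonnegative weights $1/|\set{S}|$, it preserves the per-term ordering; substituting the bound term-by-term converts the equality of the expansion into the claimed inequality, and the finite average of cubic remainders is itself $\mathcal{O}(\Delta\vf^3)$.

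There is essentially no obstacle internal to this corollary: it is a one-line specialisation once \cref{thm:functmajor,lem:xent-bregman} are in hand. The only genuine work sits upstream in \cref{lem:xent-bregman}, where one Taylor-expands the KL divergence to second order and must check that the hypothesis $\vy^\top\vone=1$ forces the first-order KL contribution to vanish, so that the leading surviving term is the quadratic $\half\norm{\Delta\vf(\vx)}_\infty^2$ with a controlled cubic remainder. For the corollary itself, the single point meriting a sentence of care is bookkeeping on the error term: averaging the finitely many per-example remainders $\mathcal{O}(\Delta\vf^3)$ over $\set{S}$ does not change their order, so the aggregate remainder remains $\mathcal{O}(\Delta\vf^3)$ and can be reported as such in the final display.
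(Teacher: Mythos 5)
Your proposal is correct and matches the paper's own proof, which is exactly the one-line combination of \cref{thm:functmajor} with \cref{lem:xent-bregman}. The extra care you take over averaging the per-example $\mathcal{O}(\Delta\vf^3)$ remainders is sound but the paper does not even bother to spell it out.
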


When the functional perturbation is reasonably ``spread out'', we would expect $\norm{\Delta \vf(\vx)}_\infty^2 \approx \norm{\Delta \vf(\vx)}_2^2/d_L$. In this setting, the functional majorisation of cross-entropy loss agrees with the functional expansion of mean squared error to second order. While the paper derives automatic gradient descent for the square loss, this observation justifies its application to cross-entropy loss, as in the case of the ImageNet experiments.

\subsection{Recovering existing frameworks}
\label{sec:recover}

We briefly observe that three existing optimisation frameworks may be recovered efficiently from \cref{thm:functmajor}:

\paragraph{Mirror descent} For linear models $\vf(\vx;\mW) \defeq \mW \vx$, the Bregman divergence $\bregman_{\ell(\cdot,\vy)}(\vf(\vx), \Delta \vf(\vx))$ may be written $\bregman_{\ell(\cdot,\vy)}(\mW\vx, \Delta\mW\vx)$. This is a convex function of the weight perturbation $\Delta \mW$. Substituting into \cref{thm:functmajor} and minimising with respect to $\Delta \mW$ is the starting point for mirror descent.

\paragraph{Gauss-Newton method} Substituting the linearised functional perturbation $\Delta \vf(\vx) \approx \nabla_\vw \vf(\vx) \Delta \vw$ into \cref{lem:sq-major} and minimising with respect to $\Delta \vw$ is the starting point for the Gauss-Newton method.

\paragraph{Natural gradient descent} Substituting the linearised functional perturbation $\Delta \vf(\vx) \approx \nabla_\vw \vf(\vx) \Delta \vw$ into \cref{lem:xent-major} and minimising with respect to $\Delta \vw$ is the starting point for natural gradient descent.
\section{Majorise-Minimise for Deep Learning Problems}
\label{sec:mm-dnn}

In this section, we will focus our efforts on deriving an optimiser for deep fully-connected networks trained with square loss. The derivation for cross-entropy loss is analogous. Proofs are relegated to \cref{app:proofs}. 

\begin{definition}[Fully-connected network]\label{def:dln}
A \textit{fully-connected network (FCN)} $\vf$ of depth $L$ maps an input $\vx\in\R^{d_0}$ to an output $\vf(\vx;\vw) \in \R^{d_L}$ via $L$ matrix multiplications interspersed by non-linearity $\relu(z) \defeq \max(0,z)$:
\begin{equation*}
\vf(\vx; \vw) \coloneqq \mW_L\circ(\relu{}\circ \mW_{L - 1}) \circ(\relu{}\circ \mW_{L - 2}) \circ \dots  \circ (\relu{} \circ \mW_1 \vx).
\end{equation*}
\end{definition}

In this expression, $\vw$ denotes the tuple of matrices $\vw = (\mW_1,...,\mW_L)$ with $k$th matrix $\mW_k$ in $\R^{d_k\times d_{k-1}}$. In what follows, we will find the following dimensional scaling to be particularly convenient:
\begin{prescription}[Dimensional scaling]\label{prescription:norm} For $\eta>0$, the data $(\vx,\vy)$, weights $\mW_k$ and updates $\Delta\mW_k$ should obey:
\begin{align*}
    \norm{\vx}_2 &= \sqrt{d_0}; \tag{input scaling} \\
    \norm{\mW_k}_* &= \sqrt{d_k/d_{k-1}} \hspace{1.519em}\qquad\text{for all }k=1,...,L; \tag{weight scaling} \\
    \norm{\Delta \mW_k}_* &= \sqrt{d_k/d_{k-1}} \cdot \tfrac{\eta}{L} \qquad\text{for all }k=1,...,L; \tag{update scaling}\\
    \norm{\vy}_2 &= \sqrt{d_L}. \tag{target scaling}
\end{align*}
\end{prescription}
While results can be derived without adopting \cref{prescription:norm}, the scalings substantially simplify our formulae. One reason for this is that, under \cref{prescription:norm}, we have the telescoping property that $\prod_{k=1}^L \norm{\mW_k}_* = \sqrt{d_L/d_0}$. For a concrete example of how this helps, consider the following bound on the norm of the network outputs:

\begin{restatable}[Output bound]{lemma}{outbound}
\label{lem:outbound} The output norm of a fully-connected network $\vf$ obeys the following bound:
\begin{align*}
    \norm{\vf(\vx;\vw)}_2 &\leq \left[\prod_{k=1}^L \norm{\mW_k}_* \right] \times \norm{\vx}_2 = \sqrt{d_L} \text{ under \cref{prescription:norm}}.
\end{align*}
\end{restatable}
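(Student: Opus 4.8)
The plan is to peel off the layers one at a time, exploiting two elementary facts. The operator norm is submultiplicative, so $\norm{\mM \vv}_2 \leq \norm{\mM}_* \norm{\vv}_2$ for any matrix $\mM$ and vector $\vv$; and the ReLU nonlinearity is norm-non-increasing in the Euclidean norm, since it acts coordinatewise via $z \mapsto \max(0,z)$ and hence $\abs{\relu(z_i)} \leq \abs{z_i}$, which gives $\norm{\relu(\vz)}_2 \leq \norm{\vz}_2$.

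First I would introduce the intermediate activations $\vh_0 \defeq \vx$ and $\vh_k \defeq \relu(\mW_k \vh_{k-1})$ for $k = 1, \dots, L-1$, so that $\vf(\vx;\vw) = \mW_L \vh_{L-1}$. Applying submultiplicativity to the final layer gives $\norm{\vf(\vx;\vw)}_2 \leq \norm{\mW_L}_* \norm{\vh_{L-1}}_2$. For each earlier layer I would chain the two facts together: $\norm{\vh_k}_2 = \norm{\relu(\mW_k \vh_{k-1})}_2 \leq \norm{\mW_k \vh_{k-1}}_2 \leq \norm{\mW_k}_* \norm{\vh_{k-1}}_2$. A short induction on $k$ then collapses the whole chain down to $\norm{\vf(\vx;\vw)}_2 \leq \left[\prod_{k=1}^L \norm{\mW_k}_*\right]\norm{\vx}_2$, which is exactly the stated inequality.

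Finally, to obtain the closed form under \cref{prescription:norm}, I would substitute the prescribed weight scaling $\norm{\mW_k}_* = \sqrt{d_k/d_{k-1}}$ into the product and observe that it telescopes: $\prod_{k=1}^L \sqrt{d_k/d_{k-1}} = \sqrt{d_L/d_0}$. Multiplying by the prescribed input scaling $\norm{\vx}_2 = \sqrt{d_0}$ then yields $\sqrt{d_L/d_0} \cdot \sqrt{d_0} = \sqrt{d_L}$.

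There is no genuine obstacle here: the result is a direct consequence of submultiplicativity and the contractivity of ReLU, and the only thing to be careful about is setting up the intermediate activations so that the induction cleanly separates each matrix factor from the ReLU sitting in front of it. The telescoping product is the one place where the specific form of the weight scaling does real work, converting the generic bound into the clean value $\sqrt{d_L}$.
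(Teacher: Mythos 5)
Your proposal is correct and follows essentially the same route as the paper's proof: both bound each layer via the operator norm inequality $\norm{\mM\vv}_2 \leq \norm{\mM}_*\norm{\vv}_2$ together with the norm-non-increasing property of $\relu$, apply this recursively over the depth, and then telescope the product under \cref{prescription:norm}. Your version merely spells out the induction over the intermediate activations that the paper leaves implicit.
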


So, under \cref{prescription:norm}, the bound is simple. Furthermore, the scaling of the update with a single parameter $\eta$ reduces the problem of solving for an optimiser to a single parameter problem. To see how this might make life easier, consider the following lemma that relates weight perturbations to functional perturbations:

\begin{restatable}[Deep relative trust]{lemma}{archbounds}
\label{lem:deep_perturbation_bounds}
When adjusting the weights $\vw = (\mW_1,...,\mW_L)$ of a fully-connected network $\vf$ by $\Delta\vw = (\Delta\mW_1,...,\Delta\mW_L)$, the induced functional perturbation $\Delta \vf(\vx)\defeq\vf(\vx;\vw+\Delta\vw)-\vf(\vx;\vw)$ obeys:
\begin{align*}
    \norm{\Delta\vf(\vx)}_2 &\leq \left[\prod_{k=1}^L \norm{\mW_k}_* \right] \times \norm{\vx}_2 \times \left[ \prod_{k = 1}^L \left( 1 + \frac{\Vert \Delta \mW_k \Vert_{*}}{\Vert \mW_k \Vert_{*}}\right)  - 1 \right] \leq \sqrt{d_L}\times(\exp \eta - 1) \text{ under \cref{prescription:norm}}.
\end{align*}
\end{restatable}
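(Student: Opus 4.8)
The plan is to proceed by induction on the network depth, tracking how a discrepancy between the perturbed and unperturbed forward passes accumulates layer by layer. First I would introduce notation for the intermediate activations: writing $\va_0 = \tilde\va_0 = \vx$ and, for $k \ge 1$, $\va_k = \phi_k(\mW_k \va_{k-1})$ and $\tilde\va_k = \phi_k((\mW_k + \Delta\mW_k)\tilde\va_{k-1})$, where $\phi_k = \relu$ for $k < L$ and $\phi_L$ is the identity. With this notation the functional perturbation is exactly $\Delta\vf(\vx) = \tilde\va_L - \va_L$, and I will bound the layerwise discrepancy $\delta_k \defeq \norm{\tilde\va_k - \va_k}_2$.

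The key structural step is a one-layer recursion for $\delta_k$. Because $\relu$ (and the identity) is $1$-Lipschitz and fixes the origin, applying $\phi_k$ can only shrink distances, so
\[
\delta_k \le \norm{(\mW_k + \Delta\mW_k)\tilde\va_{k-1} - \mW_k\va_{k-1}}_2.
\]
Adding and subtracting $\mW_k \tilde\va_{k-1}$ and using the triangle inequality together with submultiplicativity of the operator norm splits the right-hand side into a term that propagates the existing discrepancy, $\norm{\mW_k}_*\,\delta_{k-1}$, and a fresh term $\norm{\Delta\mW_k}_*\,\norm{\tilde\va_{k-1}}_2$ generated by the current layer's perturbation. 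To control the perturbed activation norm $\norm{\tilde\va_{k-1}}_2$ I would invoke the reasoning behind \cref{lem:outbound} applied to the perturbed subnetwork, giving $\norm{\tilde\va_{k-1}}_2 \le \big[\prod_{j<k}(\norm{\mW_j}_* + \norm{\Delta\mW_j}_*)\big]\norm{\vx}_2$ via the triangle inequality on each perturbed layer.

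With the recursion $\delta_k \le \norm{\mW_k}_*\,\delta_{k-1} + \norm{\Delta\mW_k}_*\big[\prod_{j<k}(\norm{\mW_j}_* + \norm{\Delta\mW_j}_*)\big]\norm{\vx}_2$ and base case $\delta_0 = 0$ in hand, I would prove by induction that
\[
\delta_k \le \norm{\vx}_2\Big[\textstyle\prod_{j\le k}(\norm{\mW_j}_* + \norm{\Delta\mW_j}_*) - \prod_{j\le k}\norm{\mW_j}_*\Big].
\]
The inductive step is the one genuinely fiddly computation: substituting the hypothesis and the fresh term, the two contributions carrying $\norm{\mW_k}_*$ and $\norm{\Delta\mW_k}_*$ recombine into the single factor $(\norm{\mW_k}_* + \norm{\Delta\mW_k}_*)$ multiplying $\prod_{j<k}(\cdots)$, so the bound telescopes cleanly. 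Taking $k = L$ and factoring $\prod_j \norm{\mW_j}_*$ out of the bracket converts the difference of products into the stated $\prod_{k}(1 + \norm{\Delta\mW_k}_*/\norm{\mW_k}_*) - 1$ form, establishing the first inequality.

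For the second inequality I would simply specialise to \cref{prescription:norm}: the telescoping property gives $\prod_k \norm{\mW_k}_* = \sqrt{d_L/d_0}$, so the prefactor $[\prod_k \norm{\mW_k}_*]\norm{\vx}_2$ collapses to $\sqrt{d_L}$; each ratio $\norm{\Delta\mW_k}_*/\norm{\mW_k}_*$ equals $\eta/L$; and $(1+\eta/L)^L - 1 \le \exp\eta - 1$ follows from $1 + x \le e^x$. The only real subtlety is the recursion's fresh term, where one must remember to bound the norm of the \emph{perturbed} activations rather than the unperturbed ones, so that the two products share the inflated factors the telescoping requires; everything else is routine.
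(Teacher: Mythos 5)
Your proposal is correct and follows essentially the same route as the paper's proof: an induction over depth built on the same one-layer decomposition (add and subtract the unperturbed matrix acting on the perturbed activation, then apply the triangle inequality, the operator norm bound, and the $1$-Lipschitzness of $\relu$), with the perturbed activation norm controlled by the inflated product $\prod_j(\norm{\mW_j}_* + \norm{\Delta\mW_j}_*)$ exactly as the paper does via its inductive hypothesis together with \cref{lem:outbound}. Your forward recursion on the layerwise discrepancy $\delta_k$ is just a re-packaging of the paper's ``peel off the top layer'' induction, and your handling of the final step under \cref{prescription:norm}, including $(1+\eta/L)^L - 1 \leq \exp\eta - 1$, matches as well.
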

So, under \cref{prescription:norm}, the single parameter $\eta$ directly controls the size of functional perturbations.

In terms of enforcing \cref{prescription:norm} in practice, the norms of the data $(\vx,\vy)$ may be set via pre-processing, the norm of the update $\Delta \mW_k$ may be set via the optimisation algorithm and the norm of the weight matrix $\mW_k$ may be set by the choice of initialisation. While, yes, $\norm{\mW_k}_*$ may drift during training, the amount that this can happen is limited by \citet{Weyl1912}'s inequality for singular values. In particular, after one step the perturbed operator norm $\norm{\mW_k + \Delta \mW_K}_*$ is sandwiched like $(1-\eta/L) \cdot \norm{\mW_k}_* \leq \norm{\mW_k + \Delta \mW_K}_* \leq (1+\eta/L) \cdot\norm{\mW_k}_*$.

\begin{algorithm}[t]
\caption{\captiontitle{Automatic gradient descent.} The matrix $\mW_k$ in $\R^{d_k \times d_{k-1}}$ is the weight matrix at layer $k$. The gradient $\nabla_{\mW_k} \el$ is with respect to the objective $\el$ evaluated on a mini-batch $B$ of training samples.}\label{alg:agd}
\begin{algorithmic}
\tt
\setstretch{1.8}\vspace{0.5em}
\DEF[initialise\_weights]
\FOR{layer $k$ in $\{1,...,L\}$:}
\STATE $\mW_k \sim \uniform(\mathtt{orthogonal}(d_k,d_{k-1}))$ \WCOMMENT{sample a semi-orthogonal matrix}
\STATE $\mW_k \gets \mW_k \cdot \sqrt{\frac{d_k}{d_{k-1}}}$ \WCOMMENT{rescale its singular values}
\ENDFOR
\ENDDEF
    \vspace{-1.6ex}\DEF[update\_weights]
    \STATE $G \gets \frac{1}{L}\sum_{l=1}^L \norm{\nabla_{\mW_k} \el}_F \cdot \sqrt{\frac{d_k}{d_{k-1}}}$ \WCOMMENT{get gradient summary}
\STATE $\smash{\eta \gets \log\frac{1 + \sqrt{1+ 4G }}{2}}$ \WCOMMENT{set automatic learning rate}
\FOR{layer $k$ in $\{1,...,L\}$:}
    \STATE $\mW_k \gets \mW_k - \frac{\eta}{L} \cdot \frac{\nabla_{\mW_k} \el}{\norm{\nabla_{\mW_k} \el}_F} \cdot \sqrt{\frac{d_k}{d_{k-1}}}$ \WCOMMENT{update weights}
\ENDFOR
\ENDDEF
\setstretch{1.0}
\end{algorithmic}
\end{algorithm}

\subsection{Deriving automatic gradient descent}

With both functional majorisation and deep relative trust in hand, we can majorise the deep network objective:

\begin{restatable}[Exponential majorisation]{lemma}{majordnn}\label{lem:sq-major-nn}
For an FCN with square loss, under \cref{ass:orthog} and \cref{prescription:norm}:
    \begin{equation*}
        \el(\vw+\Delta \vw) \leq \el(\vw) + \frac{\eta}{L}\sum_{k=1}^L\left[\sqrt{d_k/d_{k-1}} \times\trace\frac{\Delta \mW_k^\top\nabla_{\mW_k}\el}{\norm{\Delta \mW_k}_*}\right] + \tfrac{1}{2} \,(\exp \eta -1)^2.
    \end{equation*}
\end{restatable}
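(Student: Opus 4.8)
The plan is to substitute the two architecture-aware bounds established earlier directly into the exact functional expansion of the square loss, then read off the two matching terms. Concretely, I would start from \cref{lem:sq-major}, which under \cref{ass:orthog} gives the identity
\[
\el(\vw+\Delta\vw) = \el(\vw) + \nabla_\vw\el(\vw)^\top\Delta\vw + \frac{1}{|\set{S}|}\sum_{(\vx,\vy)\in\set{S}}\tfrac{1}{2d_L}\norm{\Delta\vf(\vx)}_2^2,
\]
and then treat the linear (gradient) term and the quadratic (Bregman) term separately, since the former will survive as an equality and the latter is where the inequality enters.

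For the gradient term, I would unfold the inner product over the tuple $\vw=(\mW_1,\dots,\mW_L)$ into a sum of per-layer Frobenius inner products, writing $\nabla_\vw\el(\vw)^\top\Delta\vw = \sum_{k=1}^L \trace(\Delta\mW_k^\top\nabla_{\mW_k}\el)$ using $\trace(A^\top B)=\trace(B^\top A)$. The key manipulation is then purely algebraic: multiply and divide each summand by $\norm{\Delta\mW_k}_*$ to expose the normalised trace appearing in the claim, and substitute the prescribed value $\norm{\Delta\mW_k}_* = \sqrt{d_k/d_{k-1}}\cdot\eta/L$ from the update scaling of \cref{prescription:norm}. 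Factoring out $\eta/L$ turns the gradient term into exactly $\frac{\eta}{L}\sum_{k=1}^L\sqrt{d_k/d_{k-1}}\cdot\trace\frac{\Delta\mW_k^\top\nabla_{\mW_k}\el}{\norm{\Delta\mW_k}_*}$, matching the middle term of the lemma with no loss.

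For the quadratic term, I would invoke \cref{lem:deep_perturbation_bounds} (deep relative trust), which under \cref{prescription:norm} yields the uniform bound $\norm{\Delta\vf(\vx)}_2 \leq \sqrt{d_L}\,(\exp\eta-1)$ for every input $\vx$. Squaring and scaling by $1/(2d_L)$ makes each summand at most $\tfrac12(\exp\eta-1)^2$, and because this bound is independent of $(\vx,\vy)$ the average over $\set{S}$ is bounded by the same quantity. Adding the two contributions produces the stated majorisation, with the sole inequality inherited from deep relative trust.

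The main obstacle — such as it is — is bookkeeping rather than anything conceptual. One must be careful that the operator norm $\norm{\Delta\mW_k}_*$ used to normalise the trace is precisely the quantity pinned down by the update scaling, so that the $\sqrt{d_k/d_{k-1}}\cdot\eta/L$ prefactor emerges cleanly; and one must check that the uniform (input-independent) nature of the deep-relative-trust bound legitimately passes through the averaging over the dataset so that the per-example $d_L$ cancels. No step requires anything beyond \cref{ass:orthog} and \cref{prescription:norm}, both of which are already in force.
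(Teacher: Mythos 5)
Your proposal is correct and follows essentially the same route as the paper's proof: substitute the deep relative trust bound (\cref{lem:deep_perturbation_bounds}) into the functional expansion of \cref{lem:sq-major}, decompose $\nabla_\vw\el(\vw)^\top\Delta\vw = \sum_{k=1}^L\trace(\Delta\mW_k^\top\nabla_{\mW_k}\el)$, and use the update scaling $\norm{\Delta\mW_k}_* = \sqrt{d_k/d_{k-1}}\cdot\eta/L$ from \cref{prescription:norm}. The only difference is that you spell out the multiply-and-divide step and the dataset averaging explicitly, which the paper leaves implicit.
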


Observe that the majorisation only depends on the magnitude of the scalar $\eta$ and on some notion of angle $\trace\Delta \mW_k^\top\nabla_{\mW_k}\el/\norm{\Delta \mW_k}_*$ between the perturbation matrix $\Delta \mW_k$ and the gradient matrix $\nabla_{\mW_k}\el$. To derive an optimiser, we would now like to minimise this majorisation with respect to $\eta$ and this angle. First, let us introduce one additional assumption and one additional definition:
\begin{assumption}[Gradient conditioning]\label{approx:g-cond} The gradient satisfies $\srank\nabla_{\mW_k}\el=1$ at all layers $k=1,...,L$.
\end{assumption}
This assumption implies that the Frobenius norm $\norm{\nabla_{\mW_k}\el}_F$ and operator norm $\norm{\nabla_{\mW_k}\el}_*$ of the gradient at layer $k$ are equal. It is not immediately obvious why this should be a good assumption. After all, the gradient is a sum of $\abs{\set{S}}$ rank-one matrices: $\nabla_{\mW_k}\el = \tfrac{1}{\abs{\set{S}}} \sum_{(\vx,\vy)\in\set{S}} \nabla_{\vh_k}\ell(\vf(\vx),\vy) \otimes \vh_{k-1}$, where $\vh_{k-1}(\vx)$ and $\vh_k(\vx)$ denote the inputs and outputs of the weight matrix $\mW_k$ at layer $k$, and $\otimes$ denotes the outer product. So, naïvely, one might expect the gradient $\nabla_{\mW_k}\el$ to have a stable rank of $\min(d_k,d_{k-1},\abs{\set{S}})$. But it turns out to be a good assumption in practice \citep{Yang2021TensorPI,yang2021tuning}. And for the definition:

\begin{definition}[Gradient summary]\label{def:gsummary}
At a weight setting $\vw$, the \textit{gradient summary} $G$ is given by:
\begin{align*}
        G & \defeq \frac{1}{L}\sum_{k=1}^L \sqrt{d_k/d_{k-1}} \cdot \norm{ \nabla_{\mW_k} \el(\vw)}_F.
\end{align*}
\end{definition}
The gradient summary is a weighted average of gradient norms over layers. It can be thought of as a way to measure the size of the gradient while accounting for the fact that the weight matrices at different layers may be on different scales. This is related to the concept of the \textit{gradient scale coefficient} of \citet{Philipp2017TheEG}.

We now have everything we need to derive automatic gradient descent via the majorise-minimise principle:

\begin{restatable}[Automatic gradient descent]{theorem}{loglr}\label{thm:log-lr}
For a deep fully-connected network, under \cref{ass:orthog,approx:g-cond} and \cref{prescription:norm}, the majorisation of square loss given in \cref{lem:sq-major-nn} is minimised by setting:
\begin{align*}
    \eta = \log\frac{1 + \sqrt{1+4G}}{2},\qquad
    \Delta \mW_k = - \frac{\eta}{L}\cdot \sqrt{d_k/d_{k-1}} \cdot\frac{\nabla_{\mW_k} \el}{\norm{\nabla_{\mW_k} \el}_F}, \qquad \text{for all layers } k=1,...,L.
\end{align*}
\end{restatable}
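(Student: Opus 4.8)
The plan is to exploit the structure of the majorisation in \cref{lem:sq-major-nn}: the final term $\tfrac{1}{2}(\exp\eta-1)^2$ depends only on the magnitude parameter $\eta$, whereas each summand $\sqrt{d_k/d_{k-1}}\,\trace(\Delta\mW_k^\top\nabla_{\mW_k}\el)/\norm{\Delta\mW_k}_*$ is invariant to rescaling $\Delta\mW_k$ and hence depends only on the \emph{direction} of the update at layer $k$. The two kinds of variable therefore decouple, and I would first minimise over the update directions with $\eta$ held fixed, then minimise the resulting scalar function over $\eta$.

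For the direction step, fix $k$ and write $\nabla \defeq \nabla_{\mW_k}\el$. Because the trace term is scale-invariant I may as well impose the prescribed magnitude $\norm{\Delta\mW_k}_* = \tfrac{\eta}{L}\sqrt{d_k/d_{k-1}}$ from \cref{prescription:norm}. Trace/operator-norm duality (von Neumann's trace inequality) gives, for any $\Delta\mW_k$,
\begin{equation*}
\trace(\Delta\mW_k^\top\nabla) \geq -\sum_i \sigma_i(\Delta\mW_k)\,\sigma_i(\nabla) \geq -\norm{\Delta\mW_k}_*\sum_i\sigma_i(\nabla),
\end{equation*}
so that $\trace(\Delta\mW_k^\top\nabla)/\norm{\Delta\mW_k}_* \geq -\sum_i\sigma_i(\nabla)$, the nuclear norm of the gradient. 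Here \cref{approx:g-cond} does the decisive work: since $\srank\nabla = 1$ forces $\nabla$ to be effectively rank one, its nuclear norm collapses to $\norm{\nabla}_* = \norm{\nabla}_F$. Equality is attained by taking $\Delta\mW_k$ anti-parallel to $\nabla$, and the normalised choice $\Delta\mW_k = -\tfrac{\eta}{L}\sqrt{d_k/d_{k-1}}\,\nabla/\norm{\nabla}_F$ both realises this bound and respects the prescribed operator norm (its single nonzero singular value is exactly $\tfrac{\eta}{L}\sqrt{d_k/d_{k-1}}$). Substituting back, the middle sum collapses to $-\tfrac{\eta}{L}\sum_{k=1}^L\sqrt{d_k/d_{k-1}}\,\norm{\nabla_{\mW_k}\el}_F = -\eta G$, recognising the gradient summary of \cref{def:gsummary}.

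The majorisation is now the scalar function $\el(\vw) - \eta G + \tfrac{1}{2}(\exp\eta-1)^2$. Differentiating in $\eta$ and setting the derivative to zero gives $(\exp\eta-1)\exp\eta = G$, i.e.\ $z^2 - z - G = 0$ for $z \defeq \exp\eta$; the positive root $z = \tfrac{1}{2}(1+\sqrt{1+4G})$ yields $\eta = \log\tfrac{1}{2}(1+\sqrt{1+4G})$, exactly as claimed. I would close by confirming this is a genuine minimiser: the second derivative $\exp\eta\,(2\exp\eta-1)$ is strictly positive for $\eta\ge 0$, so the scalar objective is convex and the stationary point is its global minimum, with $G>0$ placing it at $\eta>0$.

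The main obstacle is the direction step, and specifically the interplay between matrix-norm duality and the gradient-conditioning assumption. Without \cref{approx:g-cond} the optimal trace value would be controlled by the nuclear norm $\sum_i\sigma_i(\nabla_{\mW_k}\el)$, which cannot be read off from the Frobenius gradient norm appearing in $G$; it is precisely the rank-one hypothesis that makes the duality bound tight in a form expressible through $\norm{\nabla_{\mW_k}\el}_F$, so that the normalised update and the gradient summary emerge cleanly. The remaining $\eta$-optimisation is a routine quadratic-formula calculation.
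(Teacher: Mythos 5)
Your proposal is correct and follows essentially the same route as the paper's proof: optimise the update direction first, collapse the layer sum to $-\eta G$ using the stable-rank-one assumption, then solve $(\exp\eta-1)\exp\eta = G$ by the quadratic formula. If anything, your direction step via von Neumann's trace inequality is slightly more careful than the paper's, which simply asserts that $\Delta\mW_k \propto -\nabla_{\mW_k}\el$ minimises the trace ratio and only invokes \cref{approx:g-cond} afterwards to reduce $\norm{\nabla_{\mW_k}\el}_F^2/\norm{\nabla_{\mW_k}\el}_*$ to $\norm{\nabla_{\mW_k}\el}_F$, whereas you correctly identify that the true dual quantity is the nuclear norm and that the rank-one hypothesis is what makes the Frobenius-normalised gradient direction attain it.
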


We present pseudocode for this theorem in \cref{alg:agd}, and a PyTorch implementation in \cref{app:pytorch}. Via a simple derivation based on clear algorithmic principles, automatic gradient descent unifies various heuristic and theoretical ideas that have appeared in the literature:
\begin{itemize}[leftmargin=*]
    \item \textit{Relative updates.} The update is scaled relative to the norm of the weight matrix to which it is applied---assuming the weight matrices are scaled according to \cref{prescription:norm}. Such a scaling was proposed by \citet{You:EECS-2017-156} and further explored by \citet{carbonnelle2019layer} and \citet{my-fromage}. There is evidence that such relative synaptic updates may occur in neuroscience \citep{Loewenstein9481}.
    \item \textit{Depth scaling.} Scaling the perturbation strength like $1/L$ for networks of depth $L$ was proposed on theoretical grounds by \citet{my-fromage} based on analysis via deep relative trust.
    \item \textit{Width scaling.} The dimensional factors of $d_k$ and $d_{k-1}$ that appear closely relate to the maximal update parameterisation of \citet{Yang2021TensorPI} designed to ensure hyperparameter transfer across network width.
    \item \textit{Gradient clipping.} The logarithmic dependence of the update on the gradient summary may be seen as an automatic form of \textit{adaptive gradient clipping} \citep{pmlr-v139-brock21a}---a technique which clips the gradient once its magnitude surpasses a certain threshold set by a hyperparameter.
\end{itemize}

\subsection{Convergence analysis}

This section presents theoretical convergence rates for automatic gradient descent. While the spirit of the analysis is standard in optimisation theory, the details may still prove interesting for their detailed characterisation of the optimisation properties of deep networks. For instance, we propose a novel Polyak-Łojasiewicz inequality tailored to the operator structure of deep networks. We begin with two observations:

\begin{restatable}[Bounded objective]{lemma}{objectivebound}\label{lem:objectivebound}
For square loss, the objective is bounded as follows:
\begin{align*}
    \el(\vw) &\leq \frac{1}{\abs{\set{S}}} \sum_{(\vx,\vy)\in \set{S}}\frac{\norm{\vf(\vx;\vw)}_2^2 +\norm{\vy}_2^2}{2d_L} \leq 1 \text{ under \cref{prescription:norm}.}
\end{align*}
\end{restatable}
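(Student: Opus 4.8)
The plan is to reduce the statement to two elementary bounds, one for each inequality in the chain, after substituting the definition of the square loss from \cref{ex:sq-loss} into the composite objective. Writing $\vf = \vf(\vx;\vw)$ for brevity, the objective becomes $\el(\vw) = \frac{1}{\abs{\set{S}}}\sum_{(\vx,\vy)\in\set{S}} \tfrac{1}{2d_L}\norm{\vf - \vy}_2^2$, so it suffices to control each per-sample term $\tfrac{1}{2d_L}\norm{\vf-\vy}_2^2$ separately and then average.

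For the first inequality, I would expand the square as $\norm{\vf - \vy}_2^2 = \norm{\vf}_2^2 - 2\,\vf^\top\vy + \norm{\vy}_2^2$ and observe that obtaining $\norm{\vf-\vy}_2^2 \le \norm{\vf}_2^2 + \norm{\vy}_2^2$ is exactly the claim that the cross term $-2\,\vf^\top\vy$ is non-positive, i.e.\ that $\vf^\top\vy \ge 0$. Summing over the dataset and dividing by $2d_L\abs{\set{S}}$ then yields the first inequality term by term. I expect this cross-term step to be the main obstacle, since $\vf^\top\vy \ge 0$ need not hold for arbitrary weights $\vw$ (a \relu{} network carries no output-layer nonlinearity, so $\vf$ may be anti-aligned with $\vy$); the honest route is either to record this sign condition as the operative regime, or to note that a triangle-inequality bound alone, $\norm{\vf-\vy}_2 \le \norm{\vf}_2 + \norm{\vy}_2$, would only deliver the weaker constant $2$ in place of $1$, so the first inequality is precisely the refinement that discards the (benign) inner-product term.

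For the second inequality, I would invoke the two ingredients supplied by \cref{prescription:norm}. The Output bound \cref{lem:outbound} gives $\norm{\vf}_2 \le \sqrt{d_L}$, hence $\norm{\vf}_2^2 \le d_L$, while the target scaling clause of \cref{prescription:norm} gives $\norm{\vy}_2 = \sqrt{d_L}$, hence $\norm{\vy}_2^2 = d_L$. Substituting these into a single summand yields $\frac{\norm{\vf}_2^2 + \norm{\vy}_2^2}{2d_L} \le \frac{d_L + d_L}{2d_L} = 1$, and since every term in the average is bounded by $1$, so is the average. Chaining the two steps then closes the bound $\el(\vw) \le \cdots \le 1$.
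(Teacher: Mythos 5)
Your proposal follows essentially the same route as the paper: substitute the square loss into the composite objective, bound each summand by $\tfrac{1}{2d_L}(\norm{\vf}_2^2+\norm{\vy}_2^2)$, and then invoke \cref{lem:outbound} together with the target-scaling clause of \cref{prescription:norm} to get $\tfrac{d_L+d_L}{2d_L}=1$. The second half of your argument matches the paper's exactly.

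The obstacle you flag in the first inequality is genuine, and it is worth noting that the paper's own proof does not address it: the paper simply writes $\norm{\vf-\vy}_2^2 \leq \norm{\vf}_2^2+\norm{\vy}_2^2$ as the first link in its chain, with no justification. As you observe, this is equivalent to $\vf(\vx;\vw)^\top\vy\geq 0$, which need not hold since the final layer is linear and the logits can be anti-aligned with the target. Without a sign condition, the safe bound is the one you describe via the triangle inequality, $\norm{\vf-\vy}_2^2\leq(\norm{\vf}_2+\norm{\vy}_2)^2\leq 4d_L$, giving $\el(\vw)\leq 2$ rather than $1$; this looser constant would then propagate into \cref{lem:gradientbound} and the constants of \cref{lem:criticalrate}. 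So your proposal is not missing an idea relative to the paper --- it reproduces the paper's argument and correctly identifies the one step that the paper leaves unjustified.
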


\begin{restatable}[Bounded gradient]{lemma}{gradientbound}\label{lem:gradientbound}
For square loss, the norm of the gradient at layer $k$ is bounded as follows:
\begin{align*}
    \norm{\nabla_{\mW_k}\el}_F &\leq \frac{\prod_{l=1}^L\norm{\mW_l}_*}{\norm{\mW_k}_*} \cdot \sqrt{\frac{2\el(\vw)}{d_L}} \cdot \sqrt{\frac{1}{\abs{\set{S}}} \sum_{(\vx,\vy)\in \set{S}}\norm{\vx}_2^2} \leq \sqrt{2\cdot\frac{d_{k-1}}{d_k}} \text{ under \cref{prescription:norm}.}
\end{align*}
\end{restatable}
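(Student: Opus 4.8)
The plan is to start from the rank-one decomposition of the layer-$k$ gradient recorded alongside \cref{approx:g-cond}, namely $\nabla_{\mW_k}\el = \tfrac{1}{\abs{\set{S}}}\sum_{(\vx,\vy)\in\set{S}} \nabla_{\vh_k}\ell \otimes \vh_{k-1}$, where $\vh_{k-1}$ is the input to $\mW_k$ and $\nabla_{\vh_k}\ell$ is the loss gradient backpropagated to the output of $\mW_k$. Applying the triangle inequality for $\norm{\cdot}_F$ together with the identity $\norm{\va\vb^\top}_F = \norm{\va}_2\norm{\vb}_2$ for a rank-one matrix reduces the task to bounding, for each sample, the product $\norm{\nabla_{\vh_k}\ell}_2 \cdot \norm{\vh_{k-1}}_2$. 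So I would split the work into a forward bound on the activation $\vh_{k-1}$ and a backward bound on the backpropagated gradient $\nabla_{\vh_k}\ell$.

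For the forward factor, $\vh_{k-1}$ is obtained by forward-propagating $\vx$ through layers $1,\dots,k-1$; since $\relu$ is $1$-Lipschitz and fixes the origin, the same argument as \cref{lem:outbound} applied to the truncated network gives $\norm{\vh_{k-1}}_2 \leq \big(\prod_{l=1}^{k-1}\norm{\mW_l}_*\big)\norm{\vx}_2$. For the backward factor, backpropagating from $\vf=\vh_L$ to $\vh_k$ multiplies $\nabla_\vf\ell$ by a product of transposed weight matrices $\mW_l^\top$ and diagonal $\relu$-derivative matrices; each diagonal has entries in $\{0,1\}$ and hence operator norm at most $1$, so $\norm{\nabla_{\vh_k}\ell}_2 \leq \big(\prod_{l=k+1}^L\norm{\mW_l}_*\big)\norm{\nabla_\vf\ell}_2$. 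For square loss $\nabla_\vf\ell = \tfrac{1}{d_L}(\vf-\vy)$, hence $\norm{\nabla_\vf\ell}_2 = \tfrac{1}{d_L}\norm{\vf-\vy}_2$. Multiplying the two factors, the two partial products of operator norms combine into $\prod_{l=1}^L\norm{\mW_l}_* / \norm{\mW_k}_*$.

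Collecting everything yields $\norm{\nabla_{\mW_k}\el}_F \leq \tfrac{\prod_l\norm{\mW_l}_*}{\norm{\mW_k}_*}\cdot\tfrac{1}{d_L}\cdot\tfrac{1}{\abs{\set{S}}}\sum_{(\vx,\vy)}\norm{\vf-\vy}_2\,\norm{\vx}_2$. I would then apply the Cauchy--Schwarz inequality to the empirical average, separating it into $\smash{\sqrt{\tfrac{1}{\abs{\set{S}}}\sum\norm{\vf-\vy}_2^2}}$ and $\smash{\sqrt{\tfrac{1}{\abs{\set{S}}}\sum\norm{\vx}_2^2}}$. By the definition of square loss, $\tfrac{1}{\abs{\set{S}}}\sum\norm{\vf-\vy}_2^2 = 2d_L\,\el(\vw)$, so the first factor equals $\sqrt{2d_L\,\el(\vw)}$; combining with the $1/d_L$ prefactor gives $\sqrt{2\el(\vw)/d_L}$, which is exactly the first claimed inequality. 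For the second inequality I would substitute \cref{prescription:norm}: the weight scaling telescopes to $\prod_{l=1}^L\norm{\mW_l}_* = \sqrt{d_L/d_0}$ while $\norm{\mW_k}_* = \sqrt{d_k/d_{k-1}}$, the input scaling gives $\smash{\sqrt{\tfrac{1}{\abs{\set{S}}}\sum\norm{\vx}_2^2}} = \sqrt{d_0}$, and \cref{lem:objectivebound} supplies $\el(\vw)\leq 1$; the dimensional factors then cancel down to $\sqrt{2\,d_{k-1}/d_k}$.

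The routine parts are the triangle inequality, Cauchy--Schwarz and the final substitution. The main thing to get right is the pair of per-layer operator-norm bounds---in particular arguing cleanly that the diagonal $\relu$-derivative matrices appearing in the backward Jacobian never inflate the operator norm, and keeping the forward and backward layer indexing consistent so that the products correctly omit the $k$th factor.
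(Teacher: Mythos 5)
Your proposal is correct and follows essentially the same route as the paper's proof: the paper likewise writes $\nabla_{\mW_k}\el$ as an average of rank-one outer products via the chain rule, bounds each factor by recursive operator-norm estimates with $\norm{\mD_l}_*\le 1$ for the $\relu$-derivative diagonals, applies the triangle inequality and Cauchy--Schwarz, and finishes with the telescoping substitution from \cref{prescription:norm} together with \cref{lem:objectivebound}. Your split into separate forward and backward factors is just a repackaging of the paper's single chain-rule expression, so there is nothing substantive to add.
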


These results help us prove that automatic gradient descent converges to a point where the gradient vanishes:

\begin{restatable}[Convergence rate to critical point]{lemma}{criticalrate}\label{lem:criticalrate}
Consider a fully-connected network trained by automatic gradient descent (\cref{thm:log-lr}) and square loss for $T$ iterations. Let $G_t$ denote the gradient summary (\cref{def:gsummary}) at step $t\leq T$. Under \cref{ass:orthog,approx:g-cond} and \cref{prescription:norm}, AGD converges at the following rate:\vspace{-0.5em}
\begin{equation*}
    \min_{t\in\{1,...,T\}} G_t^2 \leq \frac{11}{T}.
\end{equation*}
\end{restatable}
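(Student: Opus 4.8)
The plan is to run the standard majorise--minimise descent argument specialised to AGD: show that each step decreases the objective by at least a fixed multiple of $G_t^2$, and then telescope this per-step decrease against the a priori bound on the objective. The three inputs are the exponential majorisation of \cref{lem:sq-major-nn}, the update rule of \cref{thm:log-lr}, and the boundedness estimates of \cref{lem:objectivebound,lem:gradientbound}.

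First I would substitute the AGD update into the majorisation. Writing $\Delta\mW_k = -\tfrac{\eta}{L}\sqrt{d_k/d_{k-1}}\,\nabla_{\mW_k}\el/\norm{\nabla_{\mW_k}\el}_F$ and invoking the gradient conditioning of \cref{approx:g-cond} (so that $\norm{\nabla_{\mW_k}\el}_* = \norm{\nabla_{\mW_k}\el}_F$), one finds $\norm{\Delta\mW_k}_* = \tfrac{\eta}{L}\sqrt{d_k/d_{k-1}}$ and $\trace(\Delta\mW_k^\top\nabla_{\mW_k}\el)/\norm{\Delta\mW_k}_* = -\norm{\nabla_{\mW_k}\el}_F$. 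Hence the middle sum collapses, via \cref{def:gsummary}, to exactly $-\eta G$, and the majorisation reduces to $\el(\vw+\Delta\vw) \le \el(\vw) - \eta G + \tfrac12(\exp\eta-1)^2$. The per-step decrease is therefore at least $\phi(G) \defeq \eta G - \tfrac12(\exp\eta-1)^2$, where $\eta = \log x$ and $x \defeq \tfrac12(1+\sqrt{1+4G})$ is the root of $x^2 - x = G$.

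The crux is a clean lower bound on $\phi(G)$, and this is where I expect the main obstacle to lie: converting the awkward expression $\eta G - \tfrac12(\exp\eta-1)^2$, with $\eta$ given by the logarithmic rule, into a constant multiple of $G^2$ uniformly over the admissible range. The two ingredients that make this work are the algebraic identity $x^2 - x = G$, equivalently $\exp\eta - 1 = G/x$, which turns the penalty into $\tfrac12(\exp\eta-1)^2 = G^2/(2x^2)$, and the elementary inequality $\log x \ge 1 - 1/x$ (valid for $x\ge 1$), which gives $\eta G = G\log x \ge G\cdot(x-1)/x = G^2/x^2$. Subtracting yields the key descent estimate $\phi(G) \ge G^2/(2x^2)$. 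To make the denominator a fixed number I would use the a priori bound $G \le \sqrt2$, which follows from \cref{lem:gradientbound} under \cref{prescription:norm} by summing $\sqrt{d_k/d_{k-1}}\,\norm{\nabla_{\mW_k}\el}_F \le \sqrt2$ over layers in \cref{def:gsummary}. Since $x$ increases in $G$, this caps $x^2 = x+G$ by a numerical constant and gives $\phi(G) \ge G^2/11$ throughout $G\in(0,\sqrt2]$; the constant $11$ is loose (the same chain in fact yields roughly $G^2/6.4$), but it is all we need.

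Finally I would telescope. Summing the per-step decreases over $t=1,\dots,T$ gives $\tfrac{1}{11}\sum_{t=1}^T G_t^2 \le \sum_{t=1}^T\bigl[\el(\vw_t)-\el(\vw_{t+1})\bigr] = \el(\vw_1) - \el(\vw_{T+1}) \le \el(\vw_1) \le 1$, where the final inequality uses $\el \ge 0$ for the square loss together with the bounded-objective estimate of \cref{lem:objectivebound}. Bounding the minimum by the average then yields $\min_{t\le T} G_t^2 \le \tfrac{1}{T}\sum_{t=1}^T G_t^2 \le 11/T$, as claimed. Everything outside the per-step descent lemma is bookkeeping with the dimensional scalings of \cref{prescription:norm}.
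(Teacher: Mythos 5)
Your proposal is correct and follows essentially the same route as the paper: reduce the majorisation to $-\eta G + \tfrac12(\exp\eta-1)^2$, show the per-step decrease is at least $G_t^2/11$ using the a priori bound $G\le\sqrt2$ from \cref{lem:gradientbound}, and telescope against $\el(\vw_1)\le 1$. The only difference is cosmetic: where the paper bounds the per-step progress via $\log(1+u)\ge u\log 2$ and the sandwich $\tfrac{G}{2}\le u\le G$, you use the identity $x^2-x=G$ together with $\log x\ge 1-1/x$ to get the cleaner estimate $G^2/(2x^2)$ --- both land on the same constant.
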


This lemma can be converted into a convergence rate to a global minimum with one additional assumption:

\begin{assumption}[Deep Polyak-Łojasiewicz inequality] \label{ass:pl}
For some $\alpha>0$, the gradient norm is lower bounded by:
\begin{align*}
    \norm{\nabla_{\mW_k}\el}_F &\geq \alpha \times \frac{\prod_{l=1}^L\norm{\mW_l}_*}{\norm{\mW_k}_*} \cdot \sqrt{\frac{2\el(\vw)}{d_L}} \cdot \sqrt{\frac{1}{\abs{\set{S}}} \sum_{(\vx,\vy)\in \set{S}}\norm{\vx}_2^2} = \alpha \times \sqrt{2\cdot\el(\vw)\cdot\frac{d_{k-1}}{d_k}} \text{ under \cref{prescription:norm}.}
\end{align*}
\end{assumption}
This lower bound mirrors the structure of the upper bound in \cref{lem:gradientbound}. The parameter $\alpha$ captures how much of the gradient is attenuated by small singular values in the weights and by deactivated $\relu$ units. While Polyak-Łojasiewicz inequalities are common in the literature \citep{LIU202285}, our assumption is novel in that it pays attention to the operator structure of the network. \cref{ass:pl} leads to the following theorem:

\begin{restatable}[Convergence rate to global minima]{theorem}{globalrate}\label{thm:globalrate}
For automatic gradient descent (\cref{thm:log-lr}) in the same setting as \cref{lem:criticalrate} but with the addition of \cref{ass:pl}, the mean squared error objective at step $T$ obeys:
\begin{align*}
    \el(\vw_T) \leq \frac{1}{\alpha^2}\times\frac{6}{T}.
\end{align*}
\end{restatable}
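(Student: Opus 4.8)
The plan is to convert the best-iterate gradient bound of \cref{lem:criticalrate} into a last-iterate objective bound, by combining the Polyak--{\L}ojasiewicz inequality of \cref{ass:pl} with the monotonicity of the AGD trajectory. The whole argument is short, so the sketch below is essentially the full route.

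First I would apply \cref{ass:pl} at an arbitrary iterate $\vw_t$. The crucial observation is that multiplying the lower bound on $\norm{\nabla_{\mW_k}\el}_F$ by the width factor $\sqrt{d_k/d_{k-1}}$ cancels the dimensional prefactor exactly, so that each layerwise term of the gradient summary satisfies $\sqrt{d_k/d_{k-1}}\cdot\norm{\nabla_{\mW_k}\el}_F \geq \alpha\sqrt{2\,\el(\vw_t)}$. Since this lower bound is independent of the layer index $k$, averaging over the $L$ layers and recalling \cref{def:gsummary} leaves it unchanged, giving $G_t \geq \alpha\sqrt{2\,\el(\vw_t)}$. Squaring and rearranging produces the key inequality $\el(\vw_t)\leq G_t^2/(2\alpha^2)$, which holds at every step $t$.

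Next I would establish that the objective is non-increasing along the AGD trajectory. This follows directly from the majorise--minimise construction: the majorisation in \cref{lem:sq-major-nn} reduces to $\el(\vw)$ when $\Delta\vw=0$ (equivalently $\eta=0$), and \cref{thm:log-lr} chooses $\Delta\vw$ to minimise that majorisation; hence the minimised majorisation is at most its value at $\Delta\vw=0$, and since it upper-bounds the true objective we get $\el(\vw_{t+1})\leq\el(\vw_t)$ for every $t$. In particular $\el(\vw_T)\leq\el(\vw_{t^\star})$, where $t^\star\in\{1,\dots,T\}$ is an index attaining $\min_t G_t^2$.

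Finally I would chain the pieces: applying monotonicity, then the key inequality at $t=t^\star$, and then \cref{lem:criticalrate},
\[
    \el(\vw_T)\;\leq\;\el(\vw_{t^\star})\;\leq\;\frac{G_{t^\star}^2}{2\alpha^2}\;=\;\frac{1}{2\alpha^2}\,\min_{t}G_t^2\;\leq\;\frac{1}{2\alpha^2}\cdot\frac{11}{T}\;=\;\frac{11}{2\alpha^2 T}\;\leq\;\frac{6}{\alpha^2 T},
\]
which is the claimed rate. The step requiring the most care is the passage from the best-iterate guarantee of \cref{lem:criticalrate} to the last-iterate quantity $\el(\vw_T)$: this is precisely where monotonicity of the objective is indispensable, since without it the smallest gradient summary might be realised at an early iterate whose objective value says nothing about $\el(\vw_T)$. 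The remaining manipulations (the cancellation of $\sqrt{d_k/d_{k-1}}$ and the final arithmetic $11/2\leq 6$) are routine.
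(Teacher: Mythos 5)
Your proof is correct and takes essentially the same route as the paper's: invert the deep PL inequality to get $\el(\vw_t)\leq G_t^2/(2\alpha^2)$, pass from $\el(\vw_T)$ to the best iterate via monotone descent, and then apply \cref{lem:criticalrate} with the arithmetic $11/2\leq 6$. The only cosmetic difference is that the paper compresses the monotonicity step into the identity $\el(\vw_T)=\min_{t}\el(\vw_t)$ (implicitly justified by the per-step descent inequality $\el(\vw_{t+1})-\el(\vw_t)\leq -G_t^2/11$ from the proof of \cref{lem:criticalrate}), which you instead make explicit through the majorise--minimise argument.
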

\subsection{Experiments}

\begin{figure}
    \centering
    \includegraphics[width=\textwidth]{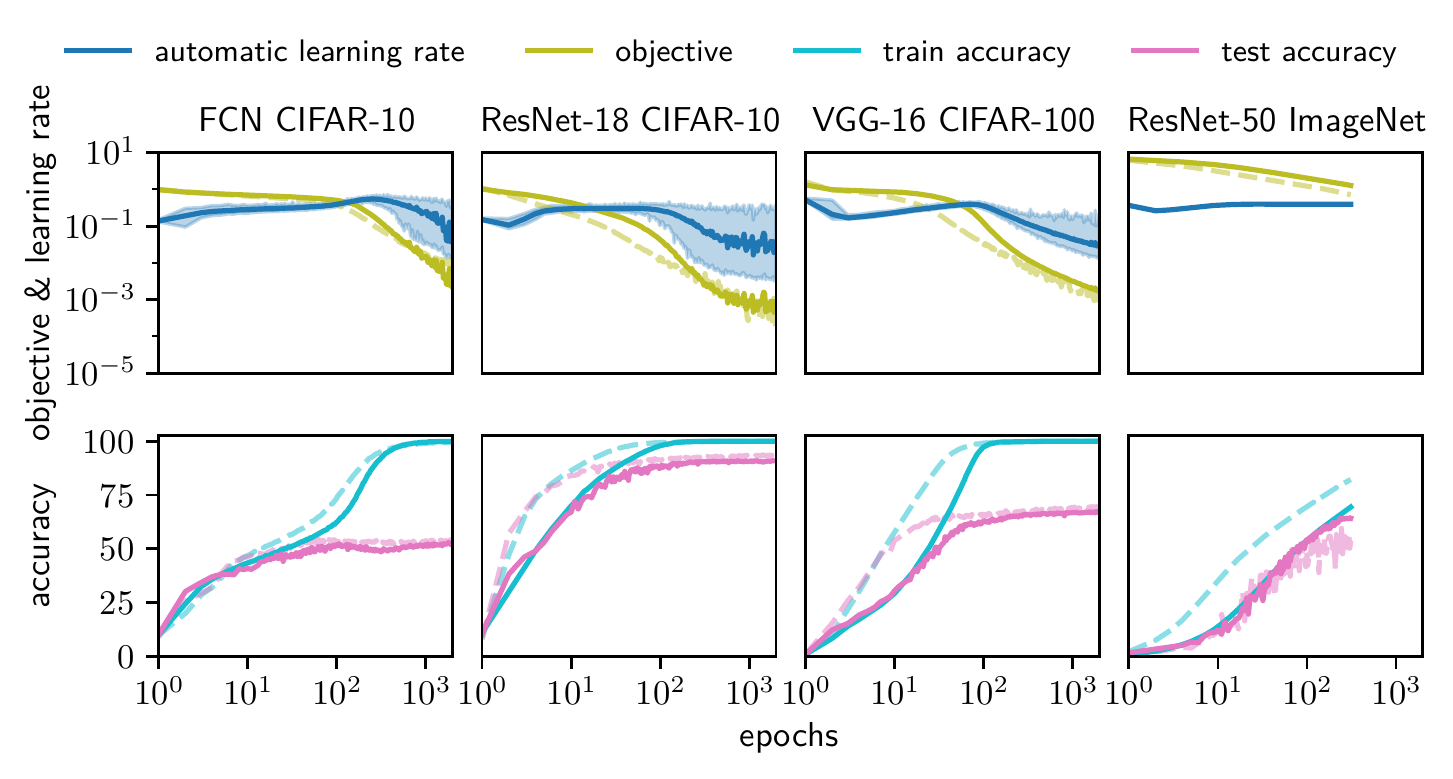}
    \caption{\captiontitle{Benchmarking automatic gradient descent on a range of architectures and datasets.} Solid lines are AGD and faint dashed lines are tuned Adam except for ImageNet where the dashed line is SGD with a fixed learning rate of 0.1. ImageNet used cross-entropy loss with a mini-batch size of 1024. The other experiments used square loss with a mini-batch size of 128.
    The \captiontitle{top row} plots the automatic learning rate ($\eta$ in the main text) and objective value. The maximum and minimum learning rate for each epoch is included in addition to the mean for the first three plots. The \captiontitle{bottom row} shows the train and test accuracy.
    }\label{fig:1}
\end{figure}

The goal of our experiments was twofold. First, we wanted to test automatic gradient descent (AGD, \cref{alg:agd}) on a broad variety of networks architectures and datasets to check that it actually works. In particular, we tested AGD on fully-connected networks (FCNs, \cref{def:dln}), and both VGG-style \citep{simonyan2015a} and ResNet-style \citep{He2015DeepRL} convolutional neural networks on the CIFAR-10, CIFAR-100 \citep{Krizhevsky09learningmultiple} and ImageNet \citep[ILSVRC2012]{deng2009imagenet} datasets with standard data augmentation. And second, to see what AGD may have to offer beyond the status quo, we wanted to compare AGD to tuned Adam and SGD baselines, as well as Adam and SGD run with their default hyperparameters.

To get AGD working with convolutional layers, we adopted a per-submatrix normalisation scheme. Specifically, for a convolutional tensor with filters of size $\mathtt{k_x} \times \mathtt{k_y}$, we implemented the normalisation separately for each of the $\mathtt{k_x} \times \mathtt{k_y}$ submatrices of dimension $\mathtt{channels_{in}} \times \mathtt{channels_{out}}$. Since AGD does not yet support biases or affine parameters in batchnorm, we disabled these parameters in all architectures. To at least adhere to \cref{prescription:norm} at initialisation, AGD draws initial weight matrices uniform semi-orthogonal and re-scaled by a factor of $\sqrt{\mathtt{fan\_in}/\mathtt{fan\_out}}$. Adam and SGD baselines used the PyTorch default initialisation. A PyTorch implementation of AGD reflecting these details is given in \cref{app:pytorch}. All experiments use square loss except ImageNet which used cross-entropy loss. Cross-entropy loss has been found to be superior to square loss for datasets with a large number of classes \citep{Demirkaya2020ExploringTR,HuiSquareCrossEntropy}.

% We will quote the test accuracy of a model to be that of the epoch with the lowest training loss.

Our experimental results are spread across five figures:
\begin{itemize}[leftmargin=*]
    \item \cref{fig:showcase} presents some highlights of our results: First, AGD can train networks that Adam and SGD with default hyperparameters cannot. Second, for ResNet-18 on CIFAR-10, AGD attained performance comparable to the best-tuned performance of Adam and SGD. And third, AGD scales up to ImageNet.
    \item \cref{fig:1} displays the breadth of our experiments: from training a 16-layer fully-connected network on CIFAR-10 to training ResNet-50 on ImageNet. Adam's learning rate was tuned over the logarithmic grid $\{10^{-5},10^{-4},...,10^{-1}\}$ while for ImageNet we used a default learning rate of 0.1 for SGD without any manual decay. AGD and Adam performed almost equally well on the depth-16 width-512 fully-connected network: 52.7\% test accuracy for AGD compared to 53.5\% for Adam.
    For ResNet-18 on CIFAR-10, Adam attained  92.9\% test accuracy compared to AGD's 91.2\%. On this benchmark, a fully-tuned SGD with learning rate schedule, weight decay, cross-entropy loss and bias and affine parameters can attain 93.0\% test accuracy \citep{kuangliu}. For VGG-16 on CIFAR-100, AGD achieved 67.4\% test accuracy compared to Adam's 69.7\%.
    Finally, on ImageNet AGD achieved a top-1 test accuracy of 65.5\% after 350 epochs.
    \item \cref{fig:2} compares AGD to Adam and SGD for training an eight-layer fully-connected network of width 256. Adam and SGD's learning rates were tuned over the logarithmic grid $\{10^{-5},10^{-4},...,10^{-1}\}$. Adam's optimal learning rate of $10^{-4}$ was three orders of magnitude smaller than SGD's optimal learning rate of $10^{-1}$. SGD did not attain as low of an objective value as Adam or AGD.
    \item \cref{fig:3} shows that AGD can train FCNs with width ranging from 64 to 2048 and depth from 2 to 32 and \cref{fig:4} shows that AGD successfully trains a four-layer FCN at varying mini-batch size: from 32 to 4096.
\end{itemize}

\begin{figure}
    \centering
    \includegraphics[width=\textwidth]{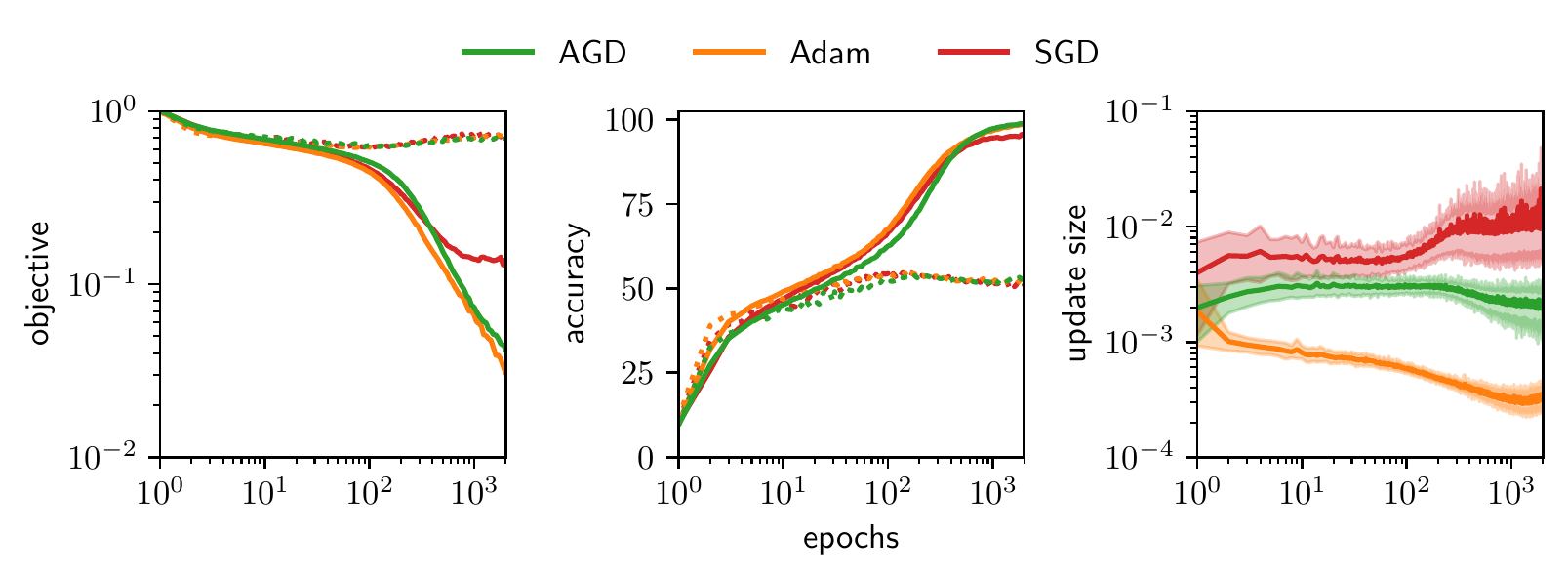}
    \caption{\captiontitle{Comparing automatic gradient descent to tuned Adam and SGD.} An eight-layer fully-connected network was trained on CIFAR-10 with square loss. Dotted lines show test and solid lines show train performance.
    The \captiontitle{left panel} shows the objective value: AGD and Adam attained a smaller training objective than SGD. The \captiontitle{middle panel} shows train and test accuracies. The \captiontitle{right panel} shows the relative update size averaged over layers: $\tfrac{1}{L}\sum_{k=1}^L \norm{\Delta \mW_k}_F/\norm{\mW_k}_F$. We plot the maximum, minimum and mean over an epoch.} \label{fig:2}
\end{figure}
\begin{figure}
    \centering
    \includegraphics[width=\textwidth]{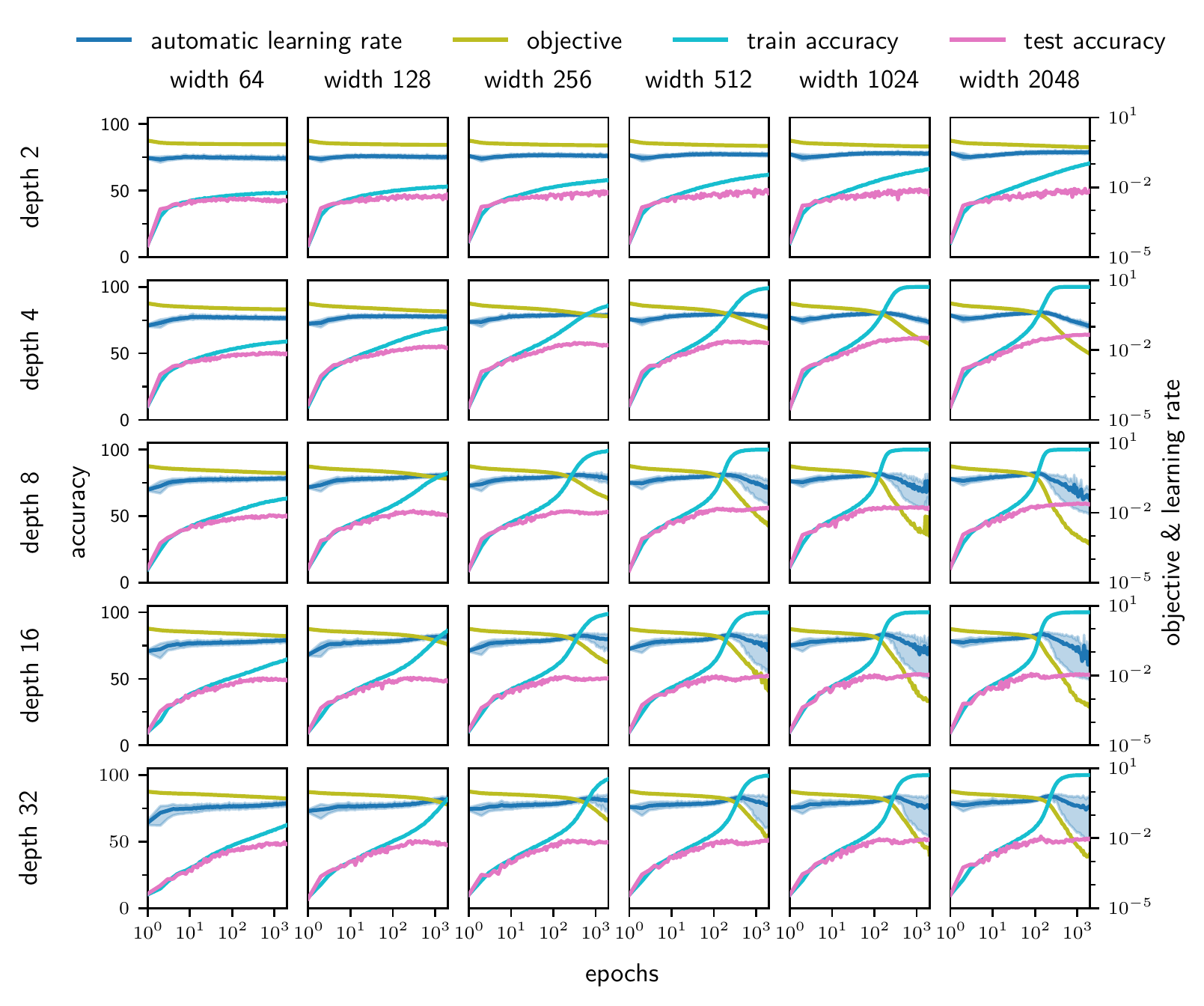}
    \caption{\captiontitle{Benchmarking automatic gradient descent on networks of varying width and depth.} We trained fully-connected networks on CIFAR-10 with square loss and a mini-batch size of 128. The depth ranged from $2$ to $32$, and the width from $64$ to $2048$, in powers of two. In terms of training performance, wider was always better, while depth 8 and depth 16 were superior to depth 32. In terms of test accuracy, the best performance was achieved at depth 4 and width 2048: 63.7\%. The worst test performance was achieved by the smallest network of depth 2 and width 64: 42.55\%.
    Larger networks display two broadly distinct phases of training: the automatic learning rate increases slowly while the objective decreases slowly, followed by a rapid decrease in the automatic learning rate and objective. This second phase typically coincides with reaching 100\% train accuracy. See \cref{fig:2} for a comparison between Adam, SGD and AGD for the 256-width 8-layer FCN.} \label{fig:3}
\end{figure}

\begin{figure}
    \centering
    \includegraphics[width=\textwidth]{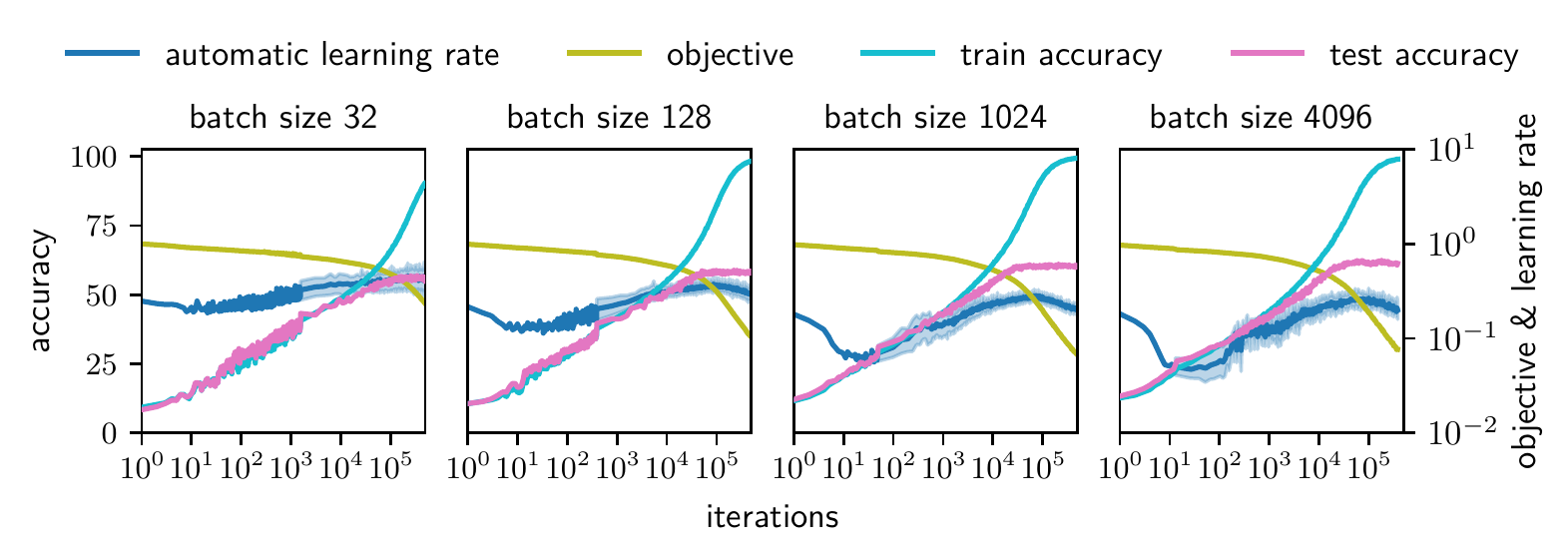}
    \caption{\captiontitle{Benchmarking automatic gradient descent at varying mini-batch size.} We trained four-layer fully-connected networks on CIFAR-10. The mini-batch size ranged from 32 to 4096. Test accuracy generally improved with increasing mini-batch size: the final test accuracies, in order of increasing mini-batch size, were 55.0\%, 58.0\%, 60.0\% and 59.8\%. The automatic learning rate seemed to initially dip, and this effect was more pronounced for larger mini-batch sizes. Metrics were computed every iteration during the first epoch and once per epoch from thereon---this explains the kinks visible in the plots.
    } \label{fig:4}
\end{figure}

\section{Discussion}

This paper has proposed a new framework for deriving optimisation algorithms for non-convex composite objective functions, which are particularly prevalent in the field of machine learning and the subfield of deep learning. What we have proposed is truly a \textit{framework}: it can be applied to a new loss function by writing down its Bregman divergence, or a new machine learning model by writing down its architectural perturbation bound. The framework is properly placed in the context of existing frameworks such as the majorise-minimise meta-algorithm, mirror descent and natural gradient descent.

Recent papers have proposed a paradigm of \textit{hyperparameter transfer} where a small network is tuned and the resulting hyperparameters are transferred to a larger network \citep{yang2021tuning, bernstein-thesis}. The methods and results in this paper suggest a stronger paradigm of \textit{hyperparameter elimination}: by detailed analysis of the structure and interactions between different components of a machine learning system, we may hope---if not to outright outlaw hyperparameters---at least to reduce their abundance and opacity.

The main product of this research is automatic gradient descent (AGD), with pseudocode given in \cref{alg:agd} and PyTorch code given in \cref{app:pytorch}. We have found AGD to be genuinely useful, and believe that it may complement automatic differentiation in helping to automate general machine learning workflows.

The analysis leading to automatic gradient descent is elementary: we leverage basic concepts in linear algebra such as matrix and vector norms, and use simple bounds such as the triangle inequality for vector--vector sums, and the operator norm bound for matrix--vector products. The analysis is non-asymptotic: it does not rely on taking dimensions to infinity, and deterministic: it does not involve random matrix theory. We believe that the accessibility of the analysis could make this paper a good starting point for future developments.

\paragraph{Directions for future work} Here we list some promising avenues for theoretical and practical research. We are exploring some of these ideas in our development codebase: \url{https://github.com/C1510/agd_exp}.

\begin{itemize}[leftmargin=*]
    \item \textit{Stochastic optimisation.} Automatic gradient descent is derived in the full-batch optimisation setting, but the algorithm is evaluated experimentally in the mini-batch setting. It would be interesting to try to extend our theoretical and practical methods to more faithfully address stochastic optimisation.
    \item \textit{More architectures.} Automatic gradient descent is derived for fully-connected networks and extended heuristically to convolutional networks. We are curious to extend the methods to more varied architectures such as transformers \citep{NIPS2017_3f5ee243} and architectural components such as biases. Since most neural networks resemble fully-connected networks in the sense that they are all just deep compound operators, we expect much of the structure of automatic gradient descent as presented to carry through.
    \item \textit{Regularisation.} The present paper deals purely with the optimisation structure of deep neural networks, and little thought is given to either generalisation or regularisation. Future work could look at both theoretical and practical regularisation schemes for automatic gradient descent. It would be interesting to try to do this without introducing hyperparameters, although we suspect that when it comes to regularisation at least one hyperparameter may become necessary.
    \item \textit{Acceleration.} We have found in some preliminary experiments that slightly increasing the update size of automatic gradient descent with a gain hyperparameter, or introducing a momentum hyperparameter, can lead to faster convergence. We emphasise that no experiment in this paper used such hyperparameters. Still, these observations may provide a valuable starting point for improving AGD in future work.
    \item \textit{Operator perturbation theory.} Part of the inspiration for this paper was the idea of applying operator perturbation theory to deep learning. While perturbation theory is well-studied in the context of linear operators \citep{Weyl1912,Kato:1966:PTL,STEWART200653}, in deep learning we are concerned with non-linear compound operators. It may be interesting to try to further extend results in perturbation theory to deep neural networks. One could imagine cataloging the perturbation structure of different neural network building blocks, and using a result similar to deep relative trust (\cref{lem:deep_perturbation_bounds}) to describe how they compound.
\end{itemize}
\subsubsection*{Acknowledgments} 

The authors are grateful to MIT SuperCloud, Oxford Hydra, NVIDIA and Virgile Richard for providing GPUs. Thanks are due to Greg Yang and Jamie Simon for helpful discussions. A paper with Greg and Jamie is in preparation to explain the relationship between muP \citep{Yang2021TensorPI} and the operator norm.

\bibliography{refs}
\bibliographystyle{tmlr/tmlr}

\newpage
\appendix
\section{Proofs}
\label{app:proofs}

Here are the proofs for the theoretical results in the main text.

\decomposition*
\begin{proof}[\mbox{\hyperref[thm:decomposition]{Proof}}]\label{proof:decomposition}
By the chain rule, $\nabla_\vw\el(\vw)^\top \Delta \vw = \frac{1}{|\set{S}|}\sum_{(\vx,\vy)\in \set{S}} \nabla_{\vf(\vx)} \ell(\vf(\vx),\vy)^\top \nabla_\vw \vf(\vx) \Delta \vw$. Therefore:
\begin{equation*}
    \Delta \el(\vw) - \nabla_\vw\el(\vw)^\top \Delta \vw = \frac{1}{|\set{S}|}\sum_{(\vx,\vy)\in \set{S}}\Delta \ell(\vf(\vx), \vy) - \nabla_{\vf(\vx)} \ell(\vf(\vx),\vy)^\top \nabla_\vw \vf(\vx) \Delta \vw.
\end{equation*}
Adding and subtracting $\frac{1}{|\set{S}|}\sum_{(\vx,\vy)\in \set{S}}\nabla_{\vf(\vx)}\ell(\vf(\vx),\vy)^\top \Delta \vf(\vx)$ on the right-hand side yields the result.
\end{proof}

\squarebreg*
\begin{proof}[\mbox{\hyperref[lem:sq-bregman]{Proof}}]\label{proof:squarebreg}
Expanding the Euclidean norms in the loss perturbation $\Delta \ell$ yields:
\begin{align*}
    \Delta \ell(\vf(\vx), \vy) & = \tfrac{1}{2d_L} \norm{\vf(\vx) + \Delta \vf(\vx) - \vy}_2^2 - \tfrac{1}{2d_L} \norm{\vf(\vx) - \vy}_2^2 \\
    &= \tfrac{1}{2d_L} \norm{\Delta \vf(\vx)}_2^2 + (\vf(\vx) - \vy)^\top \Delta \vf(\vx).
\end{align*}
The result follows by identifying that $\nabla_{\vf(\vx)}\ell(\vf(\vx),\vy)^\top \Delta \vf(\vx) = (\vf(\vx) - \vy)^\top \Delta \vf(\vx)$.
\end{proof}

\xentbreg*
\begin{proof}[\mbox{\hyperref[lem:xent-bregman]{Proof}}]\label{proof:xentbreg}
First, since $\sum_i \vy_i =1$, cross-entropy loss may be re-written:
\begin{align*}
    \ell(\vf(\vx), \vy) \defeq - \log [\softmax(\vf(\vx))]^\top \vy = - \vf(\vx)^\top \vy +  \log \norm{\exp \vf(\vx)}_1.
\end{align*}
The linear term $- \vf(\vx)^\top \vy$ does not contribute to the linearisation error and may be neglected. Therefore:
\begin{align*}
    &\Delta \ell(\vf(\vx), \vy) -\nabla_{\vf(\vx)}\ell(\vf(\vx),\vy)^\top \Delta \vf(\vx) \\
    &\quad\quad= \log \norm{\exp (\vf(\vx)+\Delta \vf(\vx))}_1 - \log \norm{\exp \vf(\vx)}_1 - \nabla_{\vf(\vx)}\log \norm{\exp \vf(\vx)}_1^\top \Delta \vf(\vx) \\
    &\quad\quad= \log \frac{1/\norm{\exp \vf(\vx)}_1}{1/\norm{\exp (\vf(\vx)+\Delta \vf(\vx))}_1} - \frac{\exp\vf(\vx)^\top}{\norm{\exp \vf(\vx)}_1} \Delta \vf(\vx)\\
    &\quad\quad=\frac{\exp\vf(\vx)^\top}{\norm{\exp \vf(\vx)}_1} \log \frac{\exp \vf(\vx)/\norm{\exp \vf(\vx)}_1}{\exp (\vf(\vx)+\Delta \vf(\vx))/\norm{\exp (\vf(\vx)+\Delta \vf(\vx))}_1}.
\end{align*}
The final line is equivalent to $\kl \Big(\softmax(\vf(\vx))\,\Big|\Big|\, \softmax(\vf(\vx)+\Delta \vf(\vx))\Big)$ establishing the first equality.

To establish the inequality, let $\otimes$ denote the outer product and define $p \defeq\softmax(f(\vx))$. Then we have:
\begin{align*}
    \Delta \ell(\vf(\vx), \vy) -\nabla_{\vf(\vx)}\ell(\vf(\vx),\vy)^\top \Delta \vf(\vx) &= \frac{1}{2}\Delta \vf(\vx)^\top \nabla^2_{\vf(\vx)}\ell(\vf(\vx), \vy) \Delta \vf(\vx) + \mathcal{O}(\Delta \vf^3) \\
    &= \frac{1}{2}\Delta \vf(\vx)^\top \nabla^2_{\vf(\vx)}\log \norm{\exp \vf(\vx)}_1 \Delta \vf(\vx) + \mathcal{O}(\Delta \vf^3)\\
    &= \frac{1}{2}\Delta \vf(\vx)^\top [\diag (p) - p \otimes p] \Delta \vf(\vx) + \mathcal{O}(\Delta \vf^3)\\
    &\leq \frac{1}{2}\Delta \vf(\vx)^\top \diag (p) \Delta \vf(\vx) + \mathcal{O}(\Delta \vf^3)\\
    &\leq \frac{1}{2}\norm{\Delta \vf(\vx)}_\infty^2 + \mathcal{O}(\Delta \vf^3),
\end{align*}
where we have used that $p\otimes p$ is positive definite and then applied H\"older's inequality with $\norm{p}_1 = 1$.
\end{proof}

\functmajor*
\begin{proof}[\mbox{\hyperref[thm:functmajor]{Proof}}]\label{proof:functmajor}
The result follows by substituting \cref{ass:orthog} into \cref{thm:decomposition} and applying \cref{def:bregman}.
\end{proof}

\sqmajor*
\begin{proof}[\mbox{\hyperref[lem:sq-major]{Proof}}]\label{proof:sqmajor} Combine \cref{lem:sq-bregman} with \cref{thm:functmajor} to obtain the result.
\end{proof}

\xentmajor*
\begin{proof}[\mbox{\hyperref[lem:xent-major]{Proof}}]\label{proof:xentmajor} Combine \cref{lem:xent-bregman} with \cref{thm:functmajor} to obtain the result.
\end{proof}

\outbound*
\begin{proof}[\mbox{\hyperref[lem:outbound]{Proof}}]\label{proof:outbound}
For any vector $\vv$ and matrix $\mM$ with compatible dimensions, we have that $\norm{\mM \vv}_2 \leq \norm{\mM}_* \cdot \norm{\vv}_2$ and $\norm{\relu \vv}_2 \leq \norm{\vv}_2$. The lemma follows by applying these results recursively over the depth of the network.
\end{proof}

\archbounds*
\begin{proof}[\mbox{\hyperref[lem:deep_perturbation_bounds]{Proof}}]\label{proof:archbounds} We proceed by induction. First, consider a network with $L=1$ layers: $\vf(\vx) = \mW_1 \vx$. Observe that $\norm{\Delta \vf(\vx)}_2 = \norm{\Delta \mW_1 \vx}_2 \leq \norm{\Delta \mW_1}_*\cdot \norm{\vx}_2$ as required. Next, assume that the result holds for a network $\vg(\vx)$ with $L-1$ layers and consider adding a layer to obtain $\vf(\vx) = \mW_L\circ \relu{}\circ \vg(\vx)$. Then:
\begin{align*}
    \norm{\Delta \vf(\vx)}_2 &= \norm{(\mW_L+\Delta \mW_L)\circ \relu{} \circ (\vg(\vx)+\Delta \vg(\vx)) - \mW_L \circ \relu{} \circ \vg(\vx)}_2 \\ 
    &= \norm{\mW_L \left(\relu{} \circ (\vg(\vx)+\Delta
    \vg(\vx)) - \relu{} \circ \vg(\vx)\right) + \Delta \mW_L \left( \relu{} \circ (\vg(\vx)+\Delta \vg(\vx)) - \relu(0)\right)}_2 \\
    &\leq \norm{\mW_L}_*\cdot\norm{\Delta \vg(\vx)}_2 + \norm{\Delta \mW_L}_*\cdot(\norm{\vg(\vx)}_2 + \norm{\Delta \vg(\vx)}_2)\\
    &= (\norm{\mW_L}_*+\norm{\Delta \mW_L}_*)\cdot \norm{\Delta \vg(\vx)}_2 + \norm{\Delta \mW_L}_*\cdot \norm{\vg(\vx)}_2,
    \end{align*}
    where the inequality follows by applying the triangle inequality, the operator norm bound, the fact that $\relu{}$ is one-Lipschitz, and a further application of the triangle inequality. But by the inductive hypothesis and \cref{lem:outbound}, the right-hand side is bounded by:
    \begin{align*}
    (\norm{\mW_L}_*&+\norm{\Delta \mW_L}_*) \left[ \prod_{k = 1}^{L-1} \left( 1 + \frac{\Vert \Delta \mW_k \Vert_{*}}{\Vert \mW_k \Vert_{*}}\right)  - 1 \right] \times \left[\prod_{k=1}^{L-1} \norm{\mW_k}_* \right] \times \norm{\vx}_2 + \norm{\Delta \mW_L}_* \times \left[\prod_{k=1}^{L-1} \norm{\mW_k}_* \right] \times \norm{\vx}_2\\
    &= \left[ \prod_{k = 1}^L \left( 1 + \frac{\Vert \Delta \mW_k \Vert_{*}}{\Vert \mW_k \Vert_{*}}\right)  - 1 \right] \times \left[\prod_{k=1}^L \norm{\mW_k}_* \right] \times \norm{\vx}_2.
\end{align*}
The induction is complete. To further bound this result under \cref{prescription:norm}, observe that the product $\left[\prod_{k=1}^L \norm{\mW_k}_* \right] \times \norm{\vx}_2$ telescopes to just $\sqrt{d_L}$, while the other product satisfies:
\begin{equation*}
    \left[ \prod_{k = 1}^L \left( 1 + \frac{\Vert \Delta \mW_k \Vert_{*}}{\Vert \mW_k \Vert_{*}}\right)  - 1 \right] = \left(1+\frac{\eta}{L}\right)^L -1 \leq \lim_{L\to\infty}\left(1+\frac{\eta}{L}\right)^L-1 = \exp\eta - 1.
\end{equation*}
Combining these observations yields the result.
\end{proof}

\majordnn*
\begin{proof}[\mbox{\hyperref[lem:sq-major-nn]{Proof}}]\label{proof:majordnn}
Substitute \cref{lem:deep_perturbation_bounds} into \cref{lem:sq-major} and decompose $\nabla_\vw\el(\vw)^\top \Delta \vw = \sum_{k=1}^L \trace (\Delta \mW_k^\top \nabla_{\mW_k}\el)$. The result follows by realising that under \cref{prescription:norm}, the perturbations satisfy $\norm{\Delta \mW_k}_* = \sqrt{d_k/d_{k-1}} \cdot \frac{\eta}{L}$.
\end{proof}

\loglr*
\begin{proof}[\mbox{\hyperref[thm:log-lr]{Proof}}]\label{proof:loglr} The inner product $\trace\frac{\Delta \mW_k^\top\nabla_{\mW_k}\el}{\norm{\Delta \mW_k}_*}$ that appears in \cref{lem:sq-major-nn} is most negative when the perturbation $\Delta \mW_k$ satisfies $\Delta \mW_k/\norm{\Delta \mW_k}_* = - \nabla_{\mW_k}\el / \norm{\nabla_{\mW_k}\el}_*$. Substituting this result back into \cref{lem:sq-major-nn} yields:
\begin{equation*}
        \el(\vw+\Delta \vw) \leq \el(\vw) - \frac{\eta}{L}\sum_{k=1}^L\left[\sqrt{d_k/d_{k-1}} \times\frac{\norm{\nabla_{\mW_k}\el}_F^2}{\norm{\nabla_{\mW_k}\el}_*}\right] + \tfrac{1}{2} \,(\exp \eta -1)^2.
\end{equation*}
Under \cref{approx:g-cond}, we have that $\norm{\nabla_{\mW_k}\el}_F^2/\norm{\nabla_{\mW_k}\el}_* = \norm{\nabla_{\mW_k}\el}_F$ and so this inequality simplifies to:
\begin{equation*}
        \el(\vw+\Delta \vw) \leq \el(\vw) - \eta\cdot G + \tfrac{1}{2} \,(\exp \eta -1)^2.
\end{equation*}
Taking the derivative of the right-hand side with respect to $\eta$ and setting it to zero yields $(\exp\eta-1)\exp\eta = G$. Applying the quadratic formula and retaining the positive solution yields $\exp \eta = \half(1+\sqrt{1+4G})$. Combining this with the relation that $\Delta \mW_k/\norm{\Delta \mW_k}_* = - \nabla_{\mW_k}\el / \norm{\nabla_{\mW_k}\el}_*$ and applying that $\norm{\Delta \mW_k}_* = \sqrt{d_k/d_{k-1}} \cdot \frac{\eta}{L}$ by \cref{prescription:norm} yields the result.
\end{proof}

\objectivebound*
\begin{proof}[\mbox{\hyperref[lem:objectivebound]{Proof}}]\label{proof:objectivebound}
The result follows by the following chain of inequalities:
\begin{align*}
    \el(\vw) \defeq \frac{1}{\abs{\set{S}}} \sum_{(\vx,\vy)\in \set{S}}\frac{1}{2d_L}\norm{\vf(\vx;\vw) - \vy}_2^2 \leq \frac{1}{\abs{\set{S}}} \sum_{(\vx,\vy)\in \set{S}}\frac{1}{2d_L}(\norm{\vf(\vx;\vw)}_2^2 +\norm{\vy}_2^2) \leq \frac{1}{\abs{\set{S}}} \sum_{(\vx,\vy)\in \set{S}}\frac{d_L+d_L}{2d_L} = 1,
\end{align*}
where the second inequality holds under \cref{prescription:norm}.
\end{proof}

\gradientbound*
\begin{proof}[\mbox{\hyperref[lem:gradientbound]{Proof}}]\label{proof:gradientbound}
By the chain rule, the gradient of mean square error objective may be written:
\begin{align*}
    \nabla_{\mW_k} \el(\vw) = \frac{1}{\abs{\set{S}}} \sum_{(\vx,\vy)\in \set{S}}\frac{1}{d_L}(\vf(\vx;\vw) - \vy)^\top \mW_L \cdot \mD_{L-1}\mW_{L-1} \dots \mD_{k+1}\mW_{k+1} \cdot \mD_{k} \otimes \mD_{k-1} \mW_{k-1}\dots \mD_1 \mW_1 \vx,
\end{align*}
where $\otimes$ denotes the outer product and $\mD_k$ denotes a diagonal matrix whose entries are one when $\relu$ is active and zero when $\relu$ is inactive. Since the operator norm $\norm{\mD_k}_* = 1$, we have that the Frobenius norm $\norm{\nabla_{\mW_k} \el(\vw)}_F$ is bounded from above by:
\begin{align*}
    &\frac{1}{\abs{\set{S}}} \sum_{(\vx,\vy)\in \set{S}}\frac{1}{d_L}\norm{(\vf(\vx;\vw) - \vy)^\top \mW_L \cdot \mD_{L-1}\mW_{L-1} \dots \mD_{k+1}\mW_{k+1} \cdot \mD_{k} \otimes \mD_{k-1} \mW_{k-1}\dots \mD_1 \mW_1 \vx}_F\\
    &\hspace{3em}= \frac{1}{\abs{\set{S}}} \sum_{(\vx,\vy)\in \set{S}}\frac{1}{d_L}\norm{(\vf(\vx;\vw) - \vy)^\top \mW_L \cdot \mD_{L-1}\mW_{L-1} \dots \mD_{k+1}\mW_{k+1} \cdot \mD_{k}}_2 \cdot \norm{\mD_{k-1} \mW_{k-1}\dots \mD_1 \mW_1 \vx}_2\\
    &\hspace{3em}\leq \frac{1}{\abs{\set{S}}} \sum_{(\vx,\vy)\in \set{S}}\frac{1}{d_L}\norm{\vf(\vx;\vw) - \vy}_2\cdot \norm{\mW_L}_*\cdot \norm{\mW_{L-1}}_* \dots \norm{\mW_{k+1}}_*\cdot \norm{\mW_{k-1}}_*\dots \norm{\mW_1}_*\cdot \norm{\vx}_2 \\
    &\hspace{3em}= \frac{\prod_{l=1}^L\norm{\mW_l}_*}{\norm{\mW_k}} \times \frac{1}{\abs{\set{S}}} \sum_{(\vx,\vy)\in \set{S}}\frac{1}{d_L}\norm{\vf(\vx;\vw) - \vy}_2 \cdot \norm{\vx}_2 \\
    &\hspace{3em}\leq \frac{\prod_{l=1}^L\norm{\mW_l}_*}{\norm{\mW_k}_*} \cdot\frac{1}{\sqrt{d_L}} \sqrt{\frac{2}{\abs{\set{S}}} \sum_{(\vx,\vy)\in \set{S}}\frac{1}{2d_L}\norm{\vf(\vx;\vw) - \vy}_2^2} \cdot \sqrt{\frac{1}{\abs{\set{S}}} \sum_{(\vx,\vy)\in \set{S}}\norm{\vx}_2^2}\\
    &\hspace{3em}= \frac{\prod_{l=1}^L\norm{\mW_l}_*}{\norm{\mW_k}_*} \cdot \sqrt{\frac{2\el(\vw)}{d_L}} \cdot \sqrt{\frac{1}{\abs{\set{S}}} \sum_{(\vx,\vy)\in \set{S}}\norm{\vx}_2^2}.
\end{align*}
In the above argument, the first inequality follows by recursive application of the operator norm upper bound, and the second inequality follows from the Cauchy-Schwarz inequality. The right-hand side simplifies under \cref{prescription:norm}, and we may apply \cref{lem:objectivebound} to obtain:
\begin{align*}
    \norm{\nabla_{\mW_k} \el(\vw)}_F \leq \frac{\prod_{l=1}^L\norm{\mW_l}_*}{\norm{\mW_k}_*} \cdot \sqrt{\frac{2\el(\vw)}{d_L}} \cdot \sqrt{\frac{1}{\abs{\set{S}}} \sum_{(\vx,\vy)\in \set{S}}\norm{\vx}_2^2} \leq \frac{\sqrt{d_L/d_0}}{\sqrt{d_k / d_{k-1}}} \cdot \sqrt{\frac{2}{d_L}}\cdot \sqrt{d_0} = \sqrt{2}\cdot \sqrt{\frac{d_{k-1}}{d_k}}.
\end{align*}
\end{proof}

\criticalrate*
\begin{proof}[\mbox{\hyperref[lem:criticalrate]{Proof}}]\label{proof:criticalrate}
\cref{thm:log-lr} prescribes that $\exp\eta = \half(1+\sqrt{1+4G})$, and so $\eta = \log\big(1+\frac{\sqrt{1+4G}-1}{2}\big)$. We begin by proving some useful auxiliary bounds.  By \cref{lem:gradientbound} and \cref{prescription:norm}, the gradient summary is bounded by:
\begin{align*}
    G \defeq \frac{1}{L}\sum_{k=1}^L \sqrt{d_k/d_{k-1}} \cdot \norm{ \nabla_{\mW_k} \el(\vw)}_F \leq \frac{1}{L}\sum_{k=1}^L \sqrt{2} < 2.
\end{align*}
The fact that the gradient summary $G$ is less than two is important because, for $x\leq 1$, we have that $\log(1+x) \geq x \log 2$. In turn, this implies that since $G<2$, we have that $\eta = \log \frac{1+\sqrt{1+4G}}{2} \geq \frac{\sqrt{1+4G} - 1}{2} \log 2$. It will also be important to know that for $G<2$, we have that $\half\cdot G \leq \tfrac{\sqrt{1+4G} - 1}{2} \leq G$. 

With these bounds in hand, the analysis becomes fairly standard. By an intermediate step in the proof of \cref{thm:log-lr}, the change in objective across a single step is bounded by:
\begin{align*}
    \el(\vw+\Delta \vw)- \el(\vw)&\leq - \eta\cdot G + \tfrac{1}{2} \,(\exp \eta -1)^2 \\
    &\leq - \tfrac{\sqrt{1+4G} - 1}{2} (G \log 2 - \half  \tfrac{\sqrt{1+4G} - 1}{2})\\
    &\leq -\half \cdot (\log 2 - \half)\cdot G^2
    \leq -G^2 / 11,
\end{align*}
where the second and third inequalities follow by our auxiliary bounds. Letting $G_t$ denote the gradient summary at step $t$, averaging this bound over time steps and applying the telescoping property yields:
\begin{equation*}
    \min_{t\in[1,...,T]} G_t^2 \leq \frac{1}{T}\sum_{t=1}^{T} G_t^2 \leq \frac{11}{T}\sum_{t=1}^{T} \el(\vw_t) - \el(\vw_{t+1}) = \frac{11}{T}\cdot (\el(\vw_1) - \el(\vw_T)) \leq \frac{11}{T},
\end{equation*}
where the final inequality follows by \cref{lem:objectivebound} and the fact that $\el(\vw_T)\geq0$.

\end{proof}

\globalrate*
\begin{proof}[\mbox{\hyperref[thm:globalrate]{Proof}}]\label{proof:globalrate} 

By \cref{ass:pl}, the gradient summary at time step $t$ must satisfy $G_t \geq \alpha \times \sqrt{2\cdot\el(\vw_t)}$. Therefore the objective at time step $t$ is bounded by $\el(\vw_t) \leq G_t^2/(2\alpha^2)$. Combining with \cref{lem:criticalrate} then yields that:
\begin{equation*}
\el(\vw_T) = \min_{t\in[1,...,T]} \el(\vw_t) \leq \frac{1}{2\alpha^2}\min_{t\in[1,...,T]}G_t^2 \leq \frac{6}{\alpha^2T}.
\end{equation*}
The proof is complete.
\end{proof}

\newpage
\section{PyTorch Implementation}
\label{app:pytorch}

The following code implements automatic gradient descent in PyTorch \citep{pytorch}. We include a single gain hyperparameter which controls the update size and may be increased from its default value of 1.0 to slightly accelerate training. We emphasise that all the results reported in the paper used a gain of unity.

\inputminted[
frame=single,
framesep=2mm,
]{python}{algorithm/agd.py}

\end{document}